\newcites{New}{References}
\NewDocumentCommand{\evalat}{sO{\big}mm}{%
  \IfBooleanTF{#1}
   {\mleft. #3 \mright|_{#4}}
   {#3#2|_{#4}}%
}
\DeclareMathOperator{\argmaxG}{arg\,max} 
\newtheorem{theorem}{Theorem}
\newtheorem{corollary}{Corollary}[theorem]
\newtheorem{lemma}[theorem]{Lemma}
\newtheoremstyle{case}{}{}{}{}{}{:}{ }{}
\theoremstyle{case}
\DeclarePairedDelimiter\ceil{\lceil}{\rceil}
\DeclarePairedDelimiter\floor{\lfloor}{\rfloor}
\icmltitlerunning{Noise-adding Methods of Saliency Map as Series of Higher Order Partial Derivative}
\begin{document}

\twocolumn[
\icmltitle{Noise-adding Methods of Saliency Map\\
as Series of Higher Order Partial Derivative}


\icmlsetsymbol{equal}{*}

\begin{icmlauthorlist}
\icmlauthor{Junghoon Seo}{si}
\icmlauthor{Jeongyeol Choe}{si}
\icmlauthor{Jamyoung Koo$^*$}{si}
\icmlauthor{Seunghyeon Jeon$^*$}{si}
\icmlauthor{Beomsu Kim$^*$}{ka}
\icmlauthor{Taegyun Jeon}{si}
\end{icmlauthorlist}

\icmlaffiliation{si}{Satrec Initiative, Daejeon, Rep. of Korea}
\icmlaffiliation{ka}{School of Computing, Korea Advanced Institute of Science and Technology, Daejeon, Rep. of Korea}

\icmlcorrespondingauthor{Junghoon Seo}{sjh@satreci.com}
\icmlcorrespondingauthor{Taegyun Jeon}{tgjeon@satreci.com}

\icmlkeywords{interpretability, transparency, SmoothGrad, VarGrad, Saliency Map}

\vskip 0.3in
]


\printAffiliationsAndNotice{\icmlEqualContribution{}}

\begin{abstract}
SmoothGrad and VarGrad are techniques that enhance the empirical quality of standard saliency maps by adding noise to input. However, there were few works that provide a rigorous theoretical interpretation of those methods.
We analytically formalize the result of these noise-adding methods.
As a result, we observe two interesting results from the existing noise-adding methods.
First, SmoothGrad does not make the gradient of the score function smooth.
Second, VarGrad is independent of the gradient of the score function.
We believe that our findings provide a clue to reveal the relationship between local explanation methods of deep neural networks and higher-order partial derivatives of the score function.
\end{abstract}

\section{Introduction}
\label{introduction}
Attribution methods of neural network are model interpretation methods showing how much each component of input contributes to model prediction \cite{sundararajan2017axiomatic}.
Despite the flurry of explainability research on deep neural network over the recent years, model interpretation of deep neural networks through attribution method still remains a challenging topic. Previous researches can be grouped into two categories. One approach proposes a set of propagation rules that maximize the expressiveness of the interpretation. The other approach, on the other hand, perturbs the input in a methodical (e.g.\ optimization-based masking) or a random fashion to have interpretations of better visual quality.

In light of both approaches, we emphasize the ambiguity of noise-adding methods such as SmoothGrad \cite{smilkov2017smoothgrad} and VarGrad \cite{adebayo2018local} compared to other methods. Most of the propagation-based methods can be interpreted as variants of the backpropagation algorithm \cite{ancona2017unified} and the algorithms themselves are self-explanatory. Perturbation-based methods usually optimize or manually alter the input with respect to meaningful criteria \cite{fong2017interpretable}. However, noise-adding methods merely take the mean or the variance of saliency maps generated by adding Gaussian noise to the input. Despite their apparent simplicity, the results are surprisingly effective. Ironically, the simplicity of their approach prevents us from understanding exactly how and why noise-adding methods work for the better model interpretation.

This situation poses a twofold problem. First, since the inner workings of the method are unclear, our understanding for the results produced by the noise-adding methods are also innately unclear. Second, the lack of understanding prevents others from assessing the advantages and disadvantages of such noise-adding methods.

In this paper, we address the ambiguity of noise-adding methods by applying the multivariate Taylor's theorem and some statistical theorems on SmoothGrad and VarGrad. We obtain their analytic expressions, which reveal several interesting properties. These discoveries allow us to verify intuitively plausible but opaque explanations for the effectiveness of noise-adding methods proposed in previous works. Furthermore, we formulate a general conjecture regarding reasonable model interpretations, based on our discussions.
\begin{figure*}[ht]
\vskip 0.2in
\begin{center}
\centerline{\includegraphics[width=\textwidth]{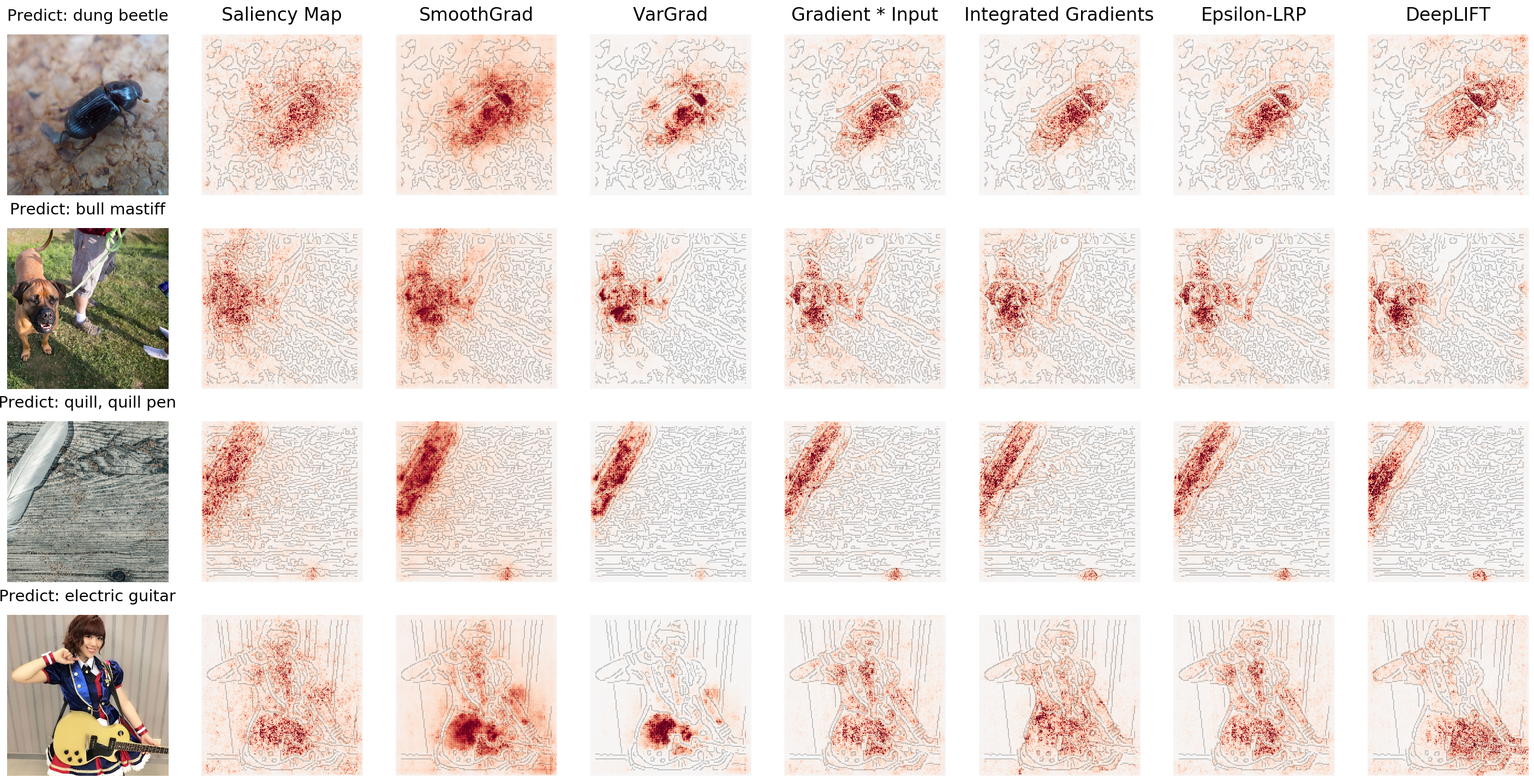}}
\caption{Sample results of saliency map, noise-adding methods, and some other gradient-based methods. For SmoothGrad and VarGrad, sampling number and standard deviation of noise are fixed to 50 and 0.025, respectively. 
The results of noise-adding methods generally look better and are less noisy than standard saliency map. This observation is consistent with discussion of the original works \cite{smilkov2017smoothgrad, adebayo2018local}. For comparison, results of the remaining four recent gradient-based methods are taken to absolute values.
See Table \ref{table} for references of four recent gradient-based attribution methods. Best seen in electric form.}
\label{figures}
\end{center}
\vskip -0.2in
\end{figure*}

Specifically, our contributions in this paper are as follows:
\begin{itemize}
\item We present non-stochastic analytic forms of approximated SmoothGrad and VarGrad and their bounds.
\item Our theorems lead to conclusions that differ from that of previous works. First, SmoothGrad does not make the gradient of the score function smooth. Second, VarGrad is independent of the gradient of the score function. In addition, their behaviors differ from that of other interpretation techniques.
\item Based on our observations, we carefully propose the conjecture that higher order partial derivatives and reasonable model interpretations are correlated.
\end{itemize}
\section{Notation}
\label{notation}
For simplicity of discussion, we limit the target network to an image classification network. Let $S:\mathbb{R}^d \mapsto \mathbb{R}^{|L|}$ be an image classification network with fixed parameters, where $x \in \mathbb{R}^d$ is a single image instance and $L$ is the set of image labels. When defining score function $S_c:\mathbb{R}^d \mapsto \mathbb{R}$ for each label candidate, $S\left(x\right) = \left[S_1\left(x\right), S_2\left(x\right),\ldots, S_{|L|}\left(x\right)\right]$. We assume that a kind of squashing function such as softmax function is applied just before the value of $S_c$ is calculated. The final classification result of $S$ for $x$ is $class_S\left(x\right) = \argmaxG_{c \in L} S_c\left(x\right)$. Note that we only consider $S_c$ to avoid using complex tensor notation.

To easily handle high-order derivatives of a multivariate function, we introduce a multi-index notation \cite{saint2017elementary}. An $d$-dimensional multi-index is a $d$-tuple $\alpha := \left(\alpha_1, \alpha_2,\ldots, \alpha_d\right)$ of non-negative integers. With this, we define
\begin{align}
|\alpha| &:=\alpha_1+\alpha_2+\ldots+\alpha_d. \\
\alpha! &:= \alpha_1!\alpha_2!\ldots \alpha_d!. \\
x^\alpha &:= x_1^{\alpha_1}x_2^{\alpha_2}\ldots x_d^{\alpha_d} \textrm{ where } x=(x_1, x_2, \ldots, x_d) \in \mathbb{R}^d.
\end{align}
If $S_c \in C^l\left(\mathbb{R}^d, \mathbb{R}\right)$ at $x_0$ and $|\alpha|  \leq  l$, a generic $|\alpha|$th-order partial derivative is denoted by
\begin{equation}
\label{eq:generic-high}
D^\alpha S_c := \frac{\partial^{|\alpha|} S_c}{\partial x_1^{\alpha_1}\ldots x_d^{\alpha_d}}.
\end{equation}
where Schwarz's theorem \cite{rudin-principles} justifies Definition \ref{eq:generic-high}.

We now define some notations for $d$-dimension $n$ noise $\epsilon$. For $k  \leq  n$, the $k$-th noise is denoted by $\epsilon_{k}=\left(\epsilon_{k1}, \epsilon_{k2}, \ldots,\epsilon_{kd}\right) \in \mathbb{R}^d$. Noise sampled independently from a probability distribution function $p(z)$ regardless of order and index is simply denoted $\epsilon \sim p(z)$. That is, $\epsilon \sim p(z)$ is equivalent to $\epsilon_{km} \sim p\left(z\right) \textrm{ } \forall k \in \left[1, n\right], \forall m \in \left[1,d\right]$.
In this cause, $\epsilon_{km}$ and $\epsilon$ are used interchangeable.

Lastly, we define simple notations for parity of integers. $\mathbb{N}^0_{even}$ means the union of the set of zero and the set of positive even numbers. $\mathbb{N}_{odd}$ refers to the set of positive odd numbers.
\section{Related Works}
\label{relate_works}
\subsection{Previous Works On Attribution Methods}
\paragraph{Saliency Map and Its Advanced Methods}
Since \cite{simonyan2013deep} first proposed using saliency maps to interpret neural networks, there have been several studies to improve propagation-based attribution method \cite{sundararajan2017axiomatic, springenberg2014striving, ancona2017unified, shrikumar2017learning, selvaraju2016grad, smilkov2017smoothgrad, bach2015pixel, montavon2017explaining, chattopadhyay2017grad}. Meanwhile, \cite{ancona2017unified} suggested the way to interpret some existing propagation-based methods as a unified gradient-based framework. On the other hand, \cite{zhang2016top, adebayo2018local} discussed the limitations of the gradient-based methodology itself.
\paragraph{Model Explanation with Perturbation}
There have been some attempts to describe the model through perturbation of input data \cite{zeiler2014visualizing, zhou2014object, cao2015look, fong2017interpretable}.
We emphasize that our major theorems and conclusions cannot be directly extended to these methods because the perturbations used in these methods are usually dependent on data or model.
\paragraph{Axiomization of Model Interpretability}
There have been several studies \cite{sundararajan2017axiomatic, ghorbani2017interpretation, adebayo2018local, kindermans2017patternnet, samek2017evaluating, dabkowski2017real} on the preferential properties or axiomization of model interpretability.
These studies are significant because they reduce ambiguity in model interpretability as research topic. 
Therefore, they are essential for an unified discussion on model interpretation.
\subsection{Brief Reviews On Our Three Topics}
\paragraph{Saliency Map} Authors of several articles \cite{erhan2009visualizing, baehrens2010explain, simonyan2013deep} proposed the saliency map, which is the partial derivative of the network output $S_c$ with respect to the input $x$, as a possible explanation for model decisions. Standard saliency map $M_c:\mathbb{R}^d \mapsto \mathbb{R}^d$ is computed by
\begin{equation}
\label{eq:saliency}
M_c\left(x\right) = \evalat[\bigg]{\frac{\partial S_c}{\partial x}}{x=x} = {\sum\limits_{|\beta|=1}}{D^\beta S_c\left(x\right)}.
\end{equation}
where $\beta$ is a multi-index.
\paragraph{SmoothGrad} The authors of \cite{smilkov2017smoothgrad} proposed SmoothGrad which calculates the average of saliency maps generated by adding Gaussian noise to the input. Compared to Equation \ref{eq:saliency}, SmoothGrad computes
\begin{equation}
\label{eq:smoothgrad}
\hat{M}_c\left(x\right) = \frac{1}{n}{\sum\limits_{k=1}^n}{M_c\left(x + \epsilon_k\right)}, \textrm{ } \epsilon \sim \mathcal{N}\left(0,\,\sigma^{2}\right).
\end{equation}
\paragraph{VarGrad} The authors of \cite{adebayo2018local} proposed VarGrad, the variance version of SmoothGrad.
\begin{align}
\label{eq:vargrad}
\widetilde{M}_c\left(x\right) &= Var\left(M_c\left(x + \epsilon_k\right)\right) \\
&= \frac{1}{n}{\sum\limits_{k=1}^n}{\left\{M_c\left(x + \epsilon_k\right)\right\}^2} - \left\{\hat{M}_c\left(x\right)\right\}^2.
\end{align}
\section{Rethinking Noise with Taylor's Theorem}
\subsection{Motivation}
Figure \ref{figures} shows the results of various attribution methods for the prediction class in inception-v3 \cite{szegedy2016rethinking} trained on ILSVRC 2013 \cite{russakovsky2015imagenet}.\footnote{The results are produced by the modified implementation of \cite{ancona2017unified}, of which the repository link is as follows: \href{https://github.com/marcoancona/DeepExplain}{https://github.com/marcoancona/DeepExplain}} As the results are shown, SmoothGrad and VarGrad seem to provide the better visual description than saliency maps in general.
More precisely, standard saliency maps overly emphasize local image regions while results produced by methods adding noise do not.
Furthermore, SmoothGrad and VarGrad produce results comparable to that of other recent gradient-based attribution methods.
The results of VarGrad are particularly sparse.
Previous studies \cite{smilkov2017smoothgrad, adebayo2018local, kindermans2017reliability} have also observed similar results.

Here our central question arises: how are the results of SmoothGrad and VarGrad different from those of the standard saliency map? For SmoothGrad, this question was covered briefly in \cite{smilkov2017smoothgrad}. The authors argued that SmoothGrad reduces the effect of `strongly fluctuating partial derivatives' on the saliency map. However, they did not offer any analytic explanation for the beneficial effect of noise on the results. As for VarGrad, its behavior is mysterious as its effectiveness. The relationship between the variability of saliency maps produced from noisy input and the saliency map produced from data is highly unclear. However, this problem has not been addressed before.

Accordingly, we attempt to answer the following questions in mathematical analysis:
\begin{itemize}
\item What is the relationship between the saliency map and the result of noise-adding methods?
\item What is the exact relationship between the result of noise-adding methods and the choice of $\sigma$?
\item Are the result of the noise-adding methods related to other factors other than the saliency map?
\end{itemize}
We express Equation \ref{eq:smoothgrad} and Equation \ref{eq:vargrad} in terms of noise parameter $\sigma$ instead of data noise $\epsilon$. 
If we cannot obtain the closed form expression of the terms for $\sigma$, we instead provide their bound as a expression for noise parameter $\sigma$.
We use the multivariate Taylor's theorem and several statistical theorems for this. Because the entire proofs are too verbose, we only write the results in this paper. The full proofs are given in the Appendix.
\subsection{SmoothGrad Does Not Make Gradient Smooth}
\label{sec:gaussian_noise}
\begin{theorem}
\label{theorem:smoothgrad}
Suppose $S_c \in C^{l+2}\left(\mathbb{R}^d, \mathbb{R}\right)$ on a closed ball $\mathbb{B}$.\footnote{Most modern neural networks are only piecewise continuously differentiable. Nonetheless, several theoretical studies \cite{funahashi1989approximate, telgarsky2016benefits, liang2016deep} have guaranteed that a general neural network can be an appropriate ($\epsilon$-)approximation of any smooth functions. Approaches for noise on neural network via Taylor's theorem are also shown in \cite{an1996effects, rifai2011adding} in the context of model regularization analysis.} If $l \geq 2$, $x \in \mathbb{B}$, $x + \epsilon_k \in \mathbb{B} \textrm{ } \forall k$, and $n$ is large enough, the result of SmoothGrad is approximated by
\begin{align}
\label{eq:appx_smoothgrad}
\hat{M}_c\left(x\right) &\approx M_c\left(x\right) \notag
\\&+ \sum\limits_{\substack{|s|=2p\\1 \leq p \leq \floor{l/2}\\\forall s_m \in \mathbb{N}^0_{even}}} \frac{D^s M_c\left(x\right)}{s!} \prod\limits_{m=1}^d \frac{s_m!}{2^{s_m/2} \left(s_m/2\right)!} \sigma^{s_m} \notag \\
&+ R_{},
\end{align}
where
\begin{align}
|R| &= \left| \frac{1}{n}{\sum\limits_{k=1}^n}\sum\limits_{|s|=l+1} \frac{D^s M_c\left(x+\eta\epsilon_k\right)}{s!} \epsilon_k^s \right|\\
&\lessapprox \sum\limits_{|s|=l+1} \frac{C}{s!}\prod\limits_{m=1}^d \frac{2^{{s_m}/2} \sigma^{s_m}}{\sqrt{\pi}} \Gamma\left(\frac{s_m}{2}+\frac{1}{2}\right).
\end{align}
with $C = \max_{\left|\alpha\right|=l+1, y \in \mathbb{B}}\left|D^\alpha M_c\left(y\right)\right|$ and some $\eta \in \left(0, 1\right)$.
\end{theorem}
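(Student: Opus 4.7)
The plan is to combine the multivariate Taylor expansion of $M_c$ around $x$ with standard Gaussian moment identities and a law-of-large-numbers step, so that the random sample average $\hat{M}_c$ collapses into a deterministic power series in $\sigma$. Since $S_c \in C^{l+2}(\mathbb{R}^d,\mathbb{R})$ on $\mathbb{B}$, every component of $M_c$ is $C^{l+1}$ there, and for each $k$ with $x+\epsilon_k \in \mathbb{B}$ Taylor's theorem in multi-index form gives
\begin{equation*}
M_c(x+\epsilon_k) = \sum_{|s|\leq l}\frac{D^s M_c(x)}{s!}\,\epsilon_k^s \;+\; \sum_{|s|=l+1}\frac{D^s M_c(x+\eta_k\epsilon_k)}{s!}\,\epsilon_k^s
\end{equation*}
for some $\eta_k\in(0,1)$. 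Averaging this identity over $k=1,\ldots,n$ represents $\hat{M}_c(x)$ as a linear combination of the empirical monomial moments $\frac{1}{n}\sum_{k}\epsilon_k^s$ for $|s|\le l$, plus the averaged Lagrange remainder, and the rest of the argument is a moment computation.

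Next I would convert the empirical moments into population moments. For large $n$ the law of large numbers yields $\frac{1}{n}\sum_k \epsilon_k^s \approx \mathbb{E}[\epsilon^s]$, and since each $\epsilon_{km}$ is independent $\mathcal{N}(0,\sigma^2)$, the expectation factorizes as $\mathbb{E}[\epsilon^s] = \prod_{m=1}^d \mathbb{E}[\epsilon_{km}^{s_m}]$. The classical central moments of a centered Gaussian give $\mathbb{E}[\epsilon_{km}^{s_m}]=0$ whenever $s_m$ is odd, and $\mathbb{E}[\epsilon_{km}^{s_m}] = (s_m-1)!!\,\sigma^{s_m} = \frac{s_m!}{2^{s_m/2}(s_m/2)!}\sigma^{s_m}$ when $s_m$ is even. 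Hence only the multi-indices whose every coordinate belongs to $\mathbb{N}^0_{even}$ survive; isolating $|s|=0$ as the $M_c(x)$ term and writing the remaining even orders as $|s|=2p$ with $1\le p\le\lfloor l/2\rfloor$ exactly reproduces the main body of the claimed formula.

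For the remainder I would apply the triangle inequality, bound $|D^s M_c(x+\eta_k\epsilon_k)|\le C$ by the hypothesized supremum over $\mathbb{B}$, and once more replace the empirical average $\frac{1}{n}\sum_k|\epsilon_k^s|$ by the expectation $\mathbb{E}|\epsilon|^s = \prod_m \mathbb{E}|\epsilon_{km}|^{s_m}$. A direct integration against the Gaussian density yields the absolute moment $\mathbb{E}|\epsilon_{km}|^{s_m} = \sigma^{s_m}\,2^{s_m/2}\,\Gamma\!\left(\frac{s_m}{2}+\frac{1}{2}\right)/\sqrt{\pi}$, and substituting this into the bound reproduces the Gamma-function expression in the statement.

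The main obstacle is less any individual computation than the multi-index bookkeeping: one must explain why within an even total order $|s|$ it is not sufficient for $|s|$ to be even, but every individual component $s_m$ must be even, which is precisely what forces the coordinatewise product $\prod_m s_m!/(2^{s_m/2}(s_m/2)!)$ to appear in place of a single double factorial in $|s|$. A secondary subtlety is the meaning of the symbols $\approx$ and $\lessapprox$: each hides a law-of-large-numbers step whose error scales like $O(1/\sqrt{n})$ in $\sigma$-dependent constants, so in the appendix this would either be quantified via Chebyshev's inequality or simply interpreted as the $n\to\infty$ limit in which the empirical moments coincide with the true Gaussian moments.
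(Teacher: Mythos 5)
Your proposal is correct and follows essentially the same route as the paper's proof: multivariate Taylor expansion with Lagrange remainder, a law-of-large-numbers replacement of empirical monomial moments by population Gaussian moments, factorization via independence so that only multi-indices with every $s_m$ even survive, and a uniform bound $C$ on the $(l+1)$-th order derivatives combined with the Gaussian absolute-moment formula for the remainder. Your indexing of the remainder point as $\eta_k$ (depending on $k$) is in fact slightly more careful than the paper's single $\eta$, but the argument is otherwise identical.
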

\begin{proof}
See Appendix C.
\end{proof}
\subsection{VarGrad Is Independent Of Gradient}
\begin{theorem}
\label{theorem:vargrad}
Suppose $S_c \in C^{l+2}\left(\mathbb{R}^d, \mathbb{R}\right)$ on a closed ball $\mathbb{B}$. If $l$ is even, $x \in \mathbb{B}$, $x + \epsilon_k \in \mathbb{B} \textrm{ } \forall k$, and $n$ is large enough, the result of VarGrad is approximated by
\begin{align}
\label{eq:appx_vargrad}
\widetilde{M}_c(x) &\approx \sum\limits_{\substack{|s|=2p-1\\1 \leq p \leq l/2}} \left\{\frac{D^s M_c\left(x\right)}{s!}\right\}^2 \Upsilon\left(s, \sigma\right) \notag \\
&+ \sum\limits_{\substack{|s|=2p\\1 \leq p \leq l/2 \\ \forall s_m \in \mathbb{N}^0_{even}}} \left\{\frac{D^s M_c(x)}{s!}\right\}^2 \left[\Upsilon(s, \sigma)  - \left\{\Upsilon(\frac{s}{2}, \sigma)\right\}^2\right] \notag \\
&+ \sum\limits_{\substack{|s|=2p\\1 \leq p \leq l/2 \\ \exists s_m \mathbb{N}_{odd} }} \left\{\frac{D^s M_c\left(x\right)}{s!}\right\}^2 \Upsilon\left(s, \sigma\right) \notag \\
&+ R_1 + 2R_2 + 2R_3,
\end{align}
where
\begin{align}
\Upsilon\left(s, \sigma\right) = \prod\limits_{m=1}^d \frac{\left(2s_m\right)!}{2^{s_m} {s_m}!} \sigma^{2{s_m}}.
\end{align}
and $R_1$, $R_2$, $R_3$ is bounded to expression for $\sigma$. The exact equation and bound of $R_1, R_2, R_3$ is shown in Appendix D.
\end{theorem}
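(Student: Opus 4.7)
The plan is to recognize that, by definition, $\widetilde{M}_c(x)$ is the sample variance of $M_c(x+\epsilon_k)$ across the $n$ draws, so for $n$ large the law of large numbers yields $\widetilde{M}_c(x) \approx \mathrm{Var}_\epsilon[M_c(x+\epsilon)]$ with $\epsilon\sim\mathcal{N}(0,\sigma^2 I_d)$. From that point on the argument is deterministic. First I would Taylor-expand $M_c$ about $x$ up to order $l$ in multi-index form,
\begin{equation*}
M_c(x+\epsilon) \;=\; \sum_{|s|\le l}\frac{D^s M_c(x)}{s!}\,\epsilon^s \;+\; \rho_l(\epsilon),
\end{equation*}
where $\rho_l$ is the order-$(l{+}1)$ remainder, exactly the object already analysed in the proof of Theorem~\ref{theorem:smoothgrad}.

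Next I would expand the variance by bilinearity into a double sum
\begin{equation*}
\sum_{|s|,|t|\le l}\frac{D^s M_c(x)\,D^t M_c(x)}{s!\,t!}\bigl(E[\epsilon^{s+t}]-E[\epsilon^s]\,E[\epsilon^t]\bigr)
\end{equation*}
and invoke the Gaussian-moment identity: for independent $\epsilon_m\sim\mathcal{N}(0,\sigma^2)$ one has $E[\epsilon^\alpha]=0$ whenever any $\alpha_m$ is odd, and otherwise $E[\epsilon^\alpha]=\prod_m \frac{\alpha_m!}{2^{\alpha_m/2}(\alpha_m/2)!}\sigma^{\alpha_m}$. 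In particular $E[\epsilon^{2s}]=\Upsilon(s,\sigma)$ for every $s$, while $E[\epsilon^s]=\Upsilon(s/2,\sigma)$ holds precisely when every $s_m$ is even. I would then isolate the diagonal $s=t$ and partition it by parity to recover the three displayed sums: (i) $|s|$ odd forces some $s_m$ to be odd, so $E[\epsilon^s]=0$ and the diagonal leaves $\{D^s M_c/s!\}^2\,\Upsilon(s,\sigma)$; (ii) $|s|$ even with every $s_m$ even produces the subtraction $\Upsilon(s,\sigma)-\{\Upsilon(s/2,\sigma)\}^2$; (iii) $|s|$ even but some $s_m$ odd again makes $E[\epsilon^s]=0$ and reproduces $\{D^s M_c/s!\}^2\,\Upsilon(s,\sigma)$. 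The $|s|=0$ piece drops out because a constant has zero variance, and the ranges $1\le p\le l/2$ are forced by $l$ being even.

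Everything else is packaged into $R_1,R_2,R_3$: the off-diagonal ($s\ne t$) polynomial contributions, the cross-covariance between the polynomial part and $\rho_l$, and $\mathrm{Var}(\rho_l)$ itself, which together explain the factors $R_1+2R_2+2R_3$ via $\mathrm{Var}(A+B)=\mathrm{Var}(A)+2\mathrm{Cov}(A,B)+\mathrm{Var}(B)$. Each piece admits a bound obtained by replacing Taylor coefficients with the uniform constant $C=\max_{|\alpha|=l+1,\,y\in\mathbb{B}}|D^\alpha M_c(y)|$ from Theorem~\ref{theorem:smoothgrad} and applying the absolute Gaussian moments $E|\epsilon_m|^k=\frac{2^{k/2}}{\sqrt{\pi}}\Gamma(\frac{k+1}{2})\sigma^k$; the explicit inequalities are the content of Appendix D. The main obstacle is the combinatorial bookkeeping: verifying that the three parity cases on the diagonal are disjoint and exhaustive, that every off-diagonal pair with $s+t$ componentwise even really lands inside $R_1$ rather than surfacing as an unaccounted explicit term, and that the polynomial-remainder cross-covariance can be controlled using only moments of $|\epsilon|^{l+1}$ without contaminating the explicit low-order contributions. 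The law-of-large-numbers step also tacitly needs $n$ large enough for sample moments up to order roughly $2(l+1)$ to concentrate, which is why the final statement is written with $\approx$ rather than equality.
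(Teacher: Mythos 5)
Your overall route is the same as the paper's: replace the sample variance by the population variance for large $n$, Taylor-expand $M_c(x+\epsilon)$ to order $l$ with a Lagrange remainder, expand the variance into covariances, evaluate the diagonal terms with Gaussian moments and the parity trichotomy, and control the remainder contributions with the uniform constant $C$ and Cauchy--Schwarz. Your diagonal computation, the identities $\mathbb{E}[\epsilon^{2s}]=\Upsilon(s,\sigma)$ and $\mathbb{E}[\epsilon^{s}]=\Upsilon(s/2,\sigma)$ for componentwise-even $s$, the cancellation of the $|s|=0$ term, and the $Var(A+B+C)$ decomposition behind the factors $R_1+2R_2+2R_3$ all match Appendix D.

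The one substantive issue is the off-diagonal polynomial covariances $Cov(\epsilon^{s},\epsilon^{t})$ with $s\neq t$ and $1\le|s|,|t|\le l$, which you propose to sweep into $R_1$ and flag as ``the main obstacle.'' In the paper these are not part of $R_1$ at all: $R_1$ is only $\sum_{|s|=l+1}(1/s!)^2\,Var(D^{s}M_c(x+\eta\epsilon_k)\epsilon_k^{s})$. The paper does handle the pairs with $|s|$ odd and $|t|$ even explicitly, and there no remainder is needed: $|s+t|$ is then odd, so some $s_m+t_m$ is odd and $\mathbb{E}[\epsilon^{s+t}]=0$, while $\mathbb{E}[\epsilon^{s}]=0$ as well, so the covariance vanishes exactly. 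The problem is the same-parity off-diagonal pairs. Take $s=(1,2)$ and $t=(3,0)$, both of odd order: $s+t=(4,2)$ is componentwise even, so $Cov(\epsilon^{s},\epsilon^{t})=\mathbb{E}[\epsilon^{(4,2)}]=3\sigma^{6}\neq 0$, which is the \emph{same} order in $\sigma$ as the retained diagonal term $Var(\epsilon^{(1,2)})=3\sigma^{6}$. Such terms can neither be dropped nor relegated to a higher-order remainder, so your hope that every off-diagonal pair ``lands inside $R_1$'' without surfacing as an unaccounted explicit term cannot be realized as stated. You have correctly located the weak point but not closed it; note that the paper's own covariance expansion silently omits exactly these same-parity off-diagonal terms, so this is a gap the displayed formula in the theorem inherits rather than one your sketch alone suffers from.
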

\begin{proof}
See Appendix D.
\end{proof}
\begin{table*}[ht]
\caption{Characteristics of formulation of noise-adding methods and some other methods. 
The column "Gradient?" indicates whether the formulation contains term for gradient.
The column "Mul w/ Input?" indicates whether the formulation contains term with input multiplied by derivate.
The column "High-order?" indicates whether the formulation contains term for high-order derivative.
See \cite{ancona2017unified} for proofs of gradient-based formulation of $\epsilon$-LRP and DeepLIFT.}
\label{sample-table}
\vskip 0.15in
\begin{center}
\begin{small}
\begin{sc}
\begin{tabular}{lcccr}
\toprule
Method & Mul w/ Input? & Gradient? & Higher-order? \\
\midrule
Saliency Map & $\times$ & $\surd$ & $\times$ \\ \hline
Gradient*Input \cite{shrikumar2016not} & $\surd$ & $\surd$ & $\times$\\
Integrated Gradient\cite{sundararajan2017axiomatic} & $\surd$ & $\surd$ & $\times$ \\
$\epsilon$-LRP \cite{bach2015pixel} & $\surd$ & $\surd$ & $\times$\\
DeepLIFT \cite{shrikumar2017learning} & $\surd$ & $\surd$ & $\times$ \\ \hline
SmoothGrad & $\times$ & $\surd$ & $\surd$\\
VarGrad & $\times$& $\times$ & $\surd$ \\
\bottomrule
\end{tabular}
\end{sc}
\end{small}
\end{center}
\label{table}
\vskip -0.1in
\end{table*}
\section{Discussions}
\label{discussions}
\subsection{Observation on Our Theorems}
\label{subsection:theorem}
\paragraph{SmoothGrad}
As pointed out in \cite{smilkov2017smoothgrad}, one of the reasons for the failure of the standard saliency map is that the partial gradient of score function for the input will act more strongly on local pixels than on global information.
Authors of \cite{smilkov2017smoothgrad} also observed that the saliency map fluctuates strongly even to small noise imperceivable to humans. Inspired by this observation, they stated SmoothGrad's motivation as follows: \textquote{Instead of basing a visualization directly on the gradient $\partial S_c$, we could base it on a smoothing of $\partial S_c$ with a Gaussian kernel.} Therefore, they argued that SmoothGrad's result looks better because SmoothGrad literally makes the gradient smooth.

Contrary to these previous discussions, our observations lead to a different conclusion.
If the discussion in \cite{smilkov2017smoothgrad} is compatible with our observation, the result of SmoothGrad when $n$ is large enough should contain a term corresponding to the smoothing effect on the saliency map $M_c(x)$.
According to Theorem \ref{theorem:smoothgrad}, that is not the case; Equation \ref{eq:smoothgrad} does not contain such a term.
Therefore, SmoothGrad does not make gradient of score function smooth from our view.
Instead, SmoothGrad is approximately the sum of the standard saliency map and the series consisting of higher-order partial derivatives and the standard deviation of the Gaussian noise.
\paragraph{VarGrad}
Although the principle of VarGrad has rarely been discussed even in the original paper \cite{adebayo2018local}, our finding about VarGrad is also counterintuitive. We can see from Theorem \ref{theorem:vargrad} that VarGrad is independent of the gradient of the score function. The result of VarGrad can be approximated as a series consisting only of higher-order partial derivatives and the standard deviation of the Gaussian noise. In other words, the result of VarGrad is not related to the saliency map.

\subsection{Comparison with Previous Discussions}
Table \ref{table} summarizes the characteristics of noise-adding methods and some other gradient-based attribution methods. 
In the table, we only deal with four recent gradient-based attribution methods listed in \cite{ancona2017unified}.
Some other gradient-based attribution methods (i.e.\ \cite{selvaraju2016grad, springenberg2014striving}) can be grouped into the same category, depending on the definition or rules of attribution.
We want to focus on the unique natures of the noise-adding methods listed in Table \ref{table}.

\paragraph{Multiplication with Input}
The presence of a term in which the input and the derivative are multiplied together has been generally taken as an important factor in sharper attribution \cite{shrikumar2017learning, sundararajan2017axiomatic, smilkov2017smoothgrad, ancona2017unified}.
Furthermore, \cite{ancona2017unified} claimed that the presence of that term makes the method a desirable global attribution method.

However, even though noise-adding methods such as VarGrad do not have these terms, their results are comparable to that of other recent attribution methods as demonstrated in Figure \ref{figures}.
Furthermore, it has been found that this term causes undesirable side effects in the attribution \cite{smilkov2017smoothgrad}, and its effect on deep neural networks (as opposed to simple linear models) is still unclear \cite{ancona2017unified}.
We therefore argue that an analytic approach to the need for multiplication with input is necessary.
\paragraph{Gradient}
On the presence or the absence of the gradient term, our findings are even more surprising. Since \cite{simonyan2013deep} first introduced model interpretation by saliency maps, all following propagation-based attribution methods have used the gradient in some way. However, our findings suggest that SmoothGrad and VarGrad deviate from this trend, as mentioned in Section \ref{subsection:theorem}.
\paragraph{Higher-order Derivative}
Taken together, our theorems suggest that a major factor affecting the result of noise-adding methods is the higher-order partial derivatives of the score function for the data point, not just the saliency map.
Despite conflicts between our conclusions and that of other works, it is undeniable that the noise-adding methods are qualitatively better than the standard saliency map.
To account for this phenomenon, we cautiously propose the conjecture that there may be a correlation between higher order partial derivatives of model function and the attributions defined from sensible axioms of model interpretability.

There is few articles that focus on the higher-order partial derivative of the model function for model explanation.
One notable exception is \cite{koh2017understanding}, which studied the influence function via Jacobian-Hessian products of the model.
The purpose of \cite{koh2017understanding}, however, is not to take the model attribution of the input but to find the responsible training data through the influence function.
As far as we know, \cite{chattopadhyay2017grad} is the study of model attribution method that is most related to higher-order derivatives.
In \cite{chattopadhyay2017grad}, computation of higher derivatives is required for getting the gradient weights in more principled way than class activation map \cite{zhou2016learning} or Grad-CAM \cite{selvaraju2016grad}.
We hope for further advanced discussions on our view in the future, with a legitimate axiom on model interpretation.
\subsection{Inaccessibility to Experimentation}
It is worth mentioning that direct computation of Equation \ref{eq:appx_smoothgrad} and Equation \ref{eq:appx_vargrad} is numerically intractable.
There are two reasons for this claim. First, it requires the calculation of an $d$-dimension explicitly restricted partition set $s$ \cite{stanley1986enumerative} with increasing order.
Additionally, $|s|$-order partial derivative of the score function $M_c$ should be computed for all possible multi-indexes.
Both are practically difficult to compute.
Despite the inaccessibility to experimentation, our view over noise-adding methods allows theoretically interesting discussions.
\section{Conclusions}
\label{conclusions}
We explored the analytic form of SmoothGrad and VarGrad, variants of the saliency map. Our conclusions about the behavior of both methods when the sample number is sufficient were conflicted with the existing view. First, SmoothGrad does not make gradient of score function smooth. Second, VarGrad is independent of gradient of score function.

To reconcile the success of noise-adding methods and our conclusions, we carefully presented a conjecture: there may be a correlation between higher order partial derivatives of the model function and a sensible model interpretation. We hope to see advanced discussions on model interpretation from this perspective in the future.

\bibliography{content_bio}
\bibliographystyle{icml2018}

\renewcommand\thesection{\Alph{section}}
\renewcommand\thesubsection{\thesection.\arabic{subsection}}

\onecolumn
\icmltitle{Noise-adding Methods of Saliency Map\\as Series of Higher Order Partial Derivative\\: Appendix}
\pagenumbering{gobble}
\icmlsetsymbol{equal}{*}

\begin{icmlauthorlist}
\icmlauthor{Junghoon Seo}{}
\icmlauthor{Jeongyeol Choe}{}
\icmlauthor{Jamyoung Koo}{}
\icmlauthor{Seunghyeon Jeon}{}
\icmlauthor{Beomsu Kim}{}
\icmlauthor{Taegyun Jeon}{}
\end{icmlauthorlist}





\setcounter{theorem}{0}
\setcounter{table}{0}
\setcounter{equation}{0}

\vskip 0.3in
\setcounter{section}{0}
\section{Notation Table}
We summarize the basic notations for symbols not introduced in Section 2 of the main paper in the following table.
For non-shown notations, see Section 2 of the main paper.
\begin{table}[h]
\centering
\caption{Notation of Basic Symbols}
\label{my-label}
\begin{tabular}{c|l}
$\mathbb{E}$         & Sample mean, or expectation of a random variable \\
$Var(\cdot)$         & Sample variance, or population variance \\
$Cov(\cdot)$         & Sample covariance, or population covariance \\
$\approx$            & Approximately equal to, especially used in context that sample number is large enough \\
$\lessapprox$        & Approximately less than or equal to, especially used in context that sample number is large enough \\
$\mathbb{N}$         & Normal distribution \\
$\Gamma(.)$             & Gamma function \\
$\sim$               & "has the probability distribution of" \\
$C^k(\cdot, \cdot)$  & Function class of which the first $k$ derivatives all exist and are continuous \\
$\ceil{\cdot}$       & Ceil function \\
$\floor{\cdot}$      & Floor function \\
$\left|\cdot\right|$ & Absolute function \\
$Z$                  & Distribution of population
\end{tabular}
\end{table}
\section{Lemmata}
Before proving the main theorems, we propose some lemmata that should be seen proactively.
\begin{lemma}
\label{lemma:odd}
Suppose $Z_m \sim p(z) \textrm{ for } 1 \leq m \leq d$, where $p\left(z\right)$ is the symmetric probability distribution with zero mean. Define $\left|s\right|=\sum_{m=1}^d s_m$ for each $s_m$ is non-negative integer. If $\left|s\right|$ is odd,
\begin{equation}
\mathbb{E}\left[Z_1^{s_1}Z_2^{s_2}\ldots Z_d^{s_d}\right] = 0.
\end{equation}
In addition, to be $\mathbb{E}\left[Z_1^{s_1}Z_2^{s_2}\ldots Z_d^{s_d}\right] \neq 0$, all $s_d$ must be zero or even.
\end{lemma}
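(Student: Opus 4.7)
The plan is to exploit two elementary facts: the independence of the $Z_m$ (which follows from the convention introduced in Section~\ref{notation} that $\epsilon \sim p(z)$ means all components are sampled independently), and the vanishing of odd moments of any zero-mean symmetric distribution. Together these reduce the lemma to a simple parity argument on the multi-index $s$.

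First I would factor the expectation using independence as
\begin{equation}
\mathbb{E}\left[Z_1^{s_1}Z_2^{s_2}\cdots Z_d^{s_d}\right] = \prod_{m=1}^{d} \mathbb{E}\left[Z_m^{s_m}\right].
\end{equation}
Then I would verify that $\mathbb{E}[Z^{k}]=0$ whenever $k$ is odd and $Z$ has a density symmetric about zero: changing variables $u = -z$ in $\int z^{k} p(z)\, dz$, and using $p(-u)=p(u)$ together with $(-u)^k = -u^k$, shows that the integral equals its own negative and hence is zero. (For $k=0$ the moment is $1$, and for positive even $k$ no such conclusion holds, which is exactly what the second claim will need.)

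Given these two ingredients both claims drop out by parity. For the first, if $|s|=\sum_m s_m$ is odd then at least one $s_m$ must itself be odd (a sum of even numbers cannot be odd), so the corresponding factor $\mathbb{E}[Z_m^{s_m}]$ in the product vanishes and the whole expectation is zero. For the second I would argue the contrapositive: if some $s_m$ is a positive odd integer then by the same single-variable argument the product vanishes, so for the expectation to be nonzero every $s_m$ must be either $0$ or a positive even integer.

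The main obstacle is essentially cosmetic rather than mathematical: the statement invokes the $Z_m$ as if they were jointly distributed but does not explicitly assert independence, and it reuses the symbol $s_d$ in the last sentence where $s_m$ is surely intended. I would either cite the convention from Section~\ref{notation} or insert a one-line remark at the start of the proof making the i.i.d.\ assumption explicit, so that the factorization step stands on firm ground; everything after that is a direct appeal to the odd-moment calculation.
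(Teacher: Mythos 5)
Your proposal is correct and follows essentially the same route as the paper's proof: factor the expectation by independence, note that at least one $s_m$ must be odd when $|s|$ is odd, and kill that factor using the vanishing of odd moments of a symmetric zero-mean distribution. The only difference is that you spell out the odd-moment computation and flag the notational slips, which the paper leaves implicit.
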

\begin{proof}
Because $\left|s\right|$ is odd, at least one of $s_m$ is odd.
Since each $Z_m$ is sampled independently, $\mathbb{E}\left[Z_1^{s_1}Z_2^{s_2}\ldots Z_d^{s_d}\right]=\mathbb{E}\left[Z_1^{s_1}\right]\mathbb{E}\left[Z_2^{s_2}\right]\ldots\mathbb{E}\left[Z_d^{s_d}\right]$. For any $m$ and odd number $\lambda$, $\mathbb{E}\left[Z_m^\lambda\right]=0$ because $p\left(z\right)$ is the symmetric probability distribution with zero mean. 
Thus, at least one $\mathbb{E}\left[Z_m^{s_m}\right]$ is zero.
\end{proof}
\begin{corollary}
\label{corollary}
Suppose $\epsilon \sim p\left(z\right)$, where $p\left(z\right)$ is the symmetric probability distribution with zero mean. Define $\left|s\right|=\sum_{m=1}^d s_m$ for each $s_m$ is non-negative integer. If $\left|s\right|$ is odd,
\begin{equation}
\frac{\sum_{k=1}^n \prod_{m=1}^d{\epsilon_{km}^{s_m}}}{n} \approx 0,
\end{equation}
where $n$ is large enough. 
In addition, to be $\sum_{k=1}^n \prod_{m=1}^d{\epsilon_{km}^{s_m}}/n \not\approx 0$ when n is large, all $s_m$ must be zero or even.
\end{corollary}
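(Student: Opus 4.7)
The plan is to recognize the displayed ratio as a sample mean of i.i.d.\ random variables and then reduce the statement to Lemma \ref{lemma:odd} via the Law of Large Numbers.

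First, since the noise entries $\epsilon_{km}$ are drawn independently across both the sample index $k$ and the coordinate index $m$, the variables $Y_k := \prod_{m=1}^d \epsilon_{km}^{s_m}$ are i.i.d.\ for $k = 1, \ldots, n$, each distributed as $Y := \prod_{m=1}^d Z_m^{s_m}$ with $Z_1, \ldots, Z_d$ i.i.d.\ from $p(z)$. Provided $\mathbb{E}|Y|$ is finite---which holds for the Gaussian noise driving SmoothGrad and VarGrad, and more generally whenever $p$ has enough moments---the Law of Large Numbers yields
\begin{equation*}
\frac{1}{n} \sum_{k=1}^n \prod_{m=1}^d \epsilon_{km}^{s_m} \;\approx\; \mathbb{E}[Y]
\end{equation*}
for sufficiently large $n$, which matches the paper's notational convention for ``$\approx$''.

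Second, I would invoke Lemma \ref{lemma:odd} directly. When $|s|$ is odd, the lemma gives $\mathbb{E}[Y] = 0$, which establishes the first claim. For the second (refined) claim, factor $\mathbb{E}[Y] = \prod_{m=1}^d \mathbb{E}[Z_m^{s_m}]$ by independence and use that $\mathbb{E}[Z_m^{\lambda}] = 0$ whenever $\lambda$ is odd (by the zero-mean symmetry of $p$); the presence of any single odd $s_m$ already forces $\mathbb{E}[Y] = 0$. Hence a nonzero limiting value requires every $s_m$ to be zero or even, which is exactly the contrapositive statement.

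The main (and essentially only) obstacle is a rigor issue rather than a conceptual one: one must decide whether ``$\approx$'' is interpreted as convergence in probability or almost surely, and verify the corresponding moment condition on $p(z)$. For the Gaussian case central to this paper all moments are finite, so this is entirely routine, and the corollary collapses to a one-line consequence of Lemma \ref{lemma:odd} combined with the LLN.
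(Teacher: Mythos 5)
Your proposal is correct and follows essentially the same route as the paper, which simply cites Lemma \ref{lemma:odd} and dismisses the rest as clear: the sample mean is identified with the population expectation for large $n$ and the lemma supplies the vanishing of that expectation. Your version is merely more explicit about the Law of Large Numbers and the requisite moment condition, which the paper leaves implicit.
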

\begin{proof}
It is clear from Lemma \ref{lemma:odd} and its proof.
\end{proof}
\begin{lemma}
\label{lemma:expectation}
Suppose $s$ is a non-negative integer. If $\epsilon \sim \mathcal{N}\left(0,\,\sigma^{2}\right)$,
\begin{align}
\mathbb{E}\left[\epsilon^{2s}\right] &= \frac{\left(2s\right)!}{2^s s!} \sigma^{2s}. \\
\mathbb{E}\left[\left|\epsilon^{s}\right|\right] &= \frac{2^{s/2} \sigma^{s}}{\sqrt{\pi}} \Gamma\left(\frac{s}{2}+\frac{1}{2}\right).
\end{align}
\end{lemma}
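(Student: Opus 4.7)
The plan is to reduce both identities to standard Gaussian integrals by scaling and then evaluating via the substitution $u = z^2/2$ that converts them into Gamma-function representations. Write $\epsilon = \sigma Z$ where $Z \sim \mathcal{N}(0,1)$, so $\mathbb{E}[\epsilon^{2s}] = \sigma^{2s}\,\mathbb{E}[Z^{2s}]$ and $\mathbb{E}[|\epsilon^s|] = \sigma^{s}\,\mathbb{E}[|Z|^s]$. This pulls all the $\sigma$-dependence outside, so it suffices to evaluate the two moments of the standard normal.

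For the first identity, I would compute
\[
\mathbb{E}[Z^{2s}] = \frac{2}{\sqrt{2\pi}}\int_0^\infty z^{2s} e^{-z^2/2}\,dz
\]
by using the evenness of the integrand to pass to a half-line integral, then substituting $u = z^2/2$ (so $z = \sqrt{2u}$ and $dz = du/\sqrt{2u}$). The integral reduces to $\tfrac{2^s}{\sqrt{\pi}}\,\Gamma(s+\tfrac{1}{2})$. The final step is to invoke the Legendre duplication identity $\Gamma(s+\tfrac{1}{2}) = \tfrac{(2s)!}{4^s s!}\sqrt{\pi}$, which collapses the expression to $\tfrac{(2s)!}{2^s s!}$, yielding the claim after multiplying by $\sigma^{2s}$.

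For the second identity, the same substitution applied to
\[
\mathbb{E}[|Z|^s] = \frac{2}{\sqrt{2\pi}}\int_0^\infty z^{s} e^{-z^2/2}\,dz
\]
directly produces $\tfrac{2^{s/2}}{\sqrt{\pi}}\,\Gamma(\tfrac{s}{2}+\tfrac{1}{2})$ without any need to invoke the duplication formula, since $s$ here need not be even. Multiplying by $\sigma^s$ gives the stated expression.

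There is no real obstacle; both parts are essentially routine Gamma-function computations, and the only point to be careful about is handling the boundary case $s=0$ (where both formulas correctly reduce to $1$) and recognizing $|\epsilon^s| = |\epsilon|^s$ for integer $s$ so that the second integral is well-defined and matches the claimed form. The Legendre duplication identity is the single nontrivial input, and it is a standard textbook result on the Gamma function.
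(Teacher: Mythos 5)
Your proposal is correct and follows essentially the same route as the paper's proof: both evaluate the Gaussian moment integrals via the Gamma function (the paper writes the integral directly with $\sigma$ in the density, you standardize first, which is only cosmetic) and both convert $\Gamma(s+\tfrac12)$ to the factorial form for the even-moment identity, with the $s=0$ case checked separately.
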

\begin{proof}
Suppose $s \geq 1$. By the Law Of The Unconscious Statistician,
\begin{align}
\mathbb{E}\left[\epsilon^{2s}\right] &= \int_{-\infty}^{+\infty} x^{2s} \frac{1}{\sqrt{2\pi\sigma^2}} \exp^{-\frac{x^2}{2\sigma^2}} dx \\
&= \frac{2^s \sigma^{2s}}{\sqrt{\pi}} \Gamma\left(s+\frac{1}{2}\right) = \frac{\left(2s\right)!}{2^s s!} \sigma^{2s}.
\end{align}
\begin{align}
\mathbb{E}\left[\left|\epsilon^{s}\right|\right] &= \int_{-\infty}^{+\infty} \left|x^s\right| \frac{1}{\sqrt{2\pi\sigma^2}} \exp^{-\frac{x^2}{2\sigma^2}} dx \\
&= \frac{2^{s/2} \sigma^{s}}{\sqrt{\pi}} \Gamma\left(\frac{s}{2}+\frac{1}{2}\right).
\end{align}
These formulas are also true for $s=0$:
\begin{align}
\mathbb{E}\left[\epsilon^{0}\right] = \mathbb{E}\left[\left|\epsilon^{0}\right|\right] = \mathbb{E}\left[1\right] = 1 = \frac{0!}{2^0 0!} \sigma^{0} = \frac{2^{0/2} \sigma^{0}}{\sqrt{\pi}} \Gamma\left(\frac{0}{2}+\frac{1}{2}\right).
\end{align}
\end{proof}
\begin{lemma}
\label{lemma:bounded_var}
Suppose two arbitrary random variables $X, Y$, and $k$-th sample $x_k$ from $X$. If $\mathbb{E}\left[Y\right] = 0$ and $\left|x_k\right| < C\textrm{ }\forall k$,
\begin{equation}
Var\left(XY\right) \leq Var\left(CY\right) = C^2Var\left(Y\right).
\end{equation}
\begin{proof}
\begin{align}
Var\left(XY\right) &=\mathbb{E}\left[X^2Y^2\right]-\{\mathbb{E}\left[XY\right]\}^2. \\
Var\left(CY\right) &=C^2\mathbb{E}\left[Y^2\right]-C^2\{\mathbb{E}\left[Y\right]\}^2 = C^2\mathbb{E}\left[Y^2\right].
\end{align}
\begin{align}
Var\left(CY\right) &- Var\left(XY\right) = \left(\mathbb{E}\left[C^2Y^2\right] -\mathbb{E}\left[X^2Y^2\right]\right) + \left\{\mathbb{E}\left[XY\right]\right\}^2 \geq 0.
\end{align}
\end{proof}
\end{lemma}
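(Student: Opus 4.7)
The plan is to compute $Var(XY)$ and $Var(CY)$ directly from the identity $Var(Z) = \mathbb{E}[Z^2] - \{\mathbb{E}[Z]\}^2$, then show their difference splits as a sum of manifestly non-negative pieces. The hypothesis $\mathbb{E}[Y] = 0$ will collapse $Var(CY)$ to $C^2 \mathbb{E}[Y^2]$, and the uniform bound on $X$ will give the pointwise comparison $X^2 Y^2 \leq C^2 Y^2$ which, after taking expectations, controls the other side.

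First I would expand
\begin{equation*}
Var(XY) = \mathbb{E}[X^2 Y^2] - \{\mathbb{E}[XY]\}^2,
\end{equation*}
and, using $\mathbb{E}[Y] = 0$, note that $Var(CY) = C^2 \mathbb{E}[Y^2] - C^2 \{\mathbb{E}[Y]\}^2 = C^2 \mathbb{E}[Y^2]$. Subtracting one from the other gives
\begin{equation*}
Var(CY) - Var(XY) = \bigl( C^2 \mathbb{E}[Y^2] - \mathbb{E}[X^2 Y^2] \bigr) + \{\mathbb{E}[XY]\}^2.
\end{equation*}

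Next I would bound each of the two resulting summands. Reading the hypothesis $|x_k| < C\ \forall k$ as the (almost-sure, or sample-wise) uniform bound $X^2 \leq C^2$, and multiplying this pointwise inequality by the non-negative quantity $Y^2$, gives $X^2 Y^2 \leq C^2 Y^2$; taking expectations preserves the inequality, so $C^2 \mathbb{E}[Y^2] - \mathbb{E}[X^2 Y^2] \geq 0$. The remaining term $\{\mathbb{E}[XY]\}^2$ is the square of a real number, hence also non-negative. Adding these two observations yields $Var(CY) - Var(XY) \geq 0$, which is the desired inequality; the identity $Var(CY) = C^2 Var(Y)$ follows immediately from the degree-$2$ homogeneity of the variance.

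The main obstacle is essentially nothing substantive: no tool such as Cauchy--Schwarz or Jensen is required, only elementary second-moment arithmetic. The only point that calls for a small clarification is the interpretation of the hypothesis $|x_k| < C\ \forall k$; it should be read as a uniform bound on $X$ itself (either almost surely, or in the sample-based sense employed elsewhere in the paper) so that the pointwise comparison $X^2 Y^2 \leq C^2 Y^2$ can legitimately be integrated term by term. Once this convention is fixed, the rest of the argument is a two-line expansion.
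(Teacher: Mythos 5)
Your proof is correct and follows essentially the same route as the paper's: expand both variances via $Var(Z)=\mathbb{E}[Z^2]-\{\mathbb{E}[Z]\}^2$, use $\mathbb{E}[Y]=0$ to simplify $Var(CY)$, and observe that the difference $Var(CY)-Var(XY)$ splits into $\left(C^2\mathbb{E}[Y^2]-\mathbb{E}[X^2Y^2]\right)+\{\mathbb{E}[XY]\}^2$, both pieces non-negative. You in fact spell out the justification of the first piece (the pointwise bound $X^2Y^2\leq C^2Y^2$ integrated term by term) more explicitly than the paper does.
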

\begin{lemma}
\label{lemma:covarvar}
Suppose two arbitrary random variables $X, Y$. Then,
\begin{equation}
\left| Cov\left(X, Y\right) \right| \leq \sqrt{Var\left(X\right) Var\left(Y\right)}.
\end{equation}
\end{lemma}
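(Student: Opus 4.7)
The plan is to recognize this as the Cauchy--Schwarz inequality applied to the centered random variables, and to derive it via the standard discriminant (nonnegativity of a quadratic) argument, which avoids invoking an inner-product framework explicitly.

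First I would center the variables: set $\tilde{X} = X - \mathbb{E}[X]$ and $\tilde{Y} = Y - \mathbb{E}[Y]$, so that by definition
\begin{align}
\mathrm{Cov}(X,Y) &= \mathbb{E}[\tilde{X}\tilde{Y}], \\
\mathrm{Var}(X) &= \mathbb{E}[\tilde{X}^2], \qquad \mathrm{Var}(Y) = \mathbb{E}[\tilde{Y}^2].
\end{align}
The target inequality then becomes $\bigl(\mathbb{E}[\tilde{X}\tilde{Y}]\bigr)^2 \leq \mathbb{E}[\tilde{X}^2]\,\mathbb{E}[\tilde{Y}^2]$.

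Next I would introduce the real-valued function $\phi(t) = \mathbb{E}\bigl[(\tilde{X} + t\tilde{Y})^2\bigr]$ for $t \in \mathbb{R}$. Since the integrand is pointwise nonnegative, $\phi(t) \geq 0$ for every $t$. Expanding the square and using linearity of expectation gives
\begin{equation}
\phi(t) = \mathbb{E}[\tilde{Y}^2]\,t^2 + 2\,\mathbb{E}[\tilde{X}\tilde{Y}]\,t + \mathbb{E}[\tilde{X}^2],
\end{equation}
which is a quadratic polynomial in $t$ (assuming $\mathrm{Var}(Y) > 0$; the degenerate case $\mathrm{Var}(Y) = 0$ is handled separately, as then $\tilde{Y} \equiv 0$ almost surely and both sides of the claimed inequality vanish). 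A quadratic $at^2 + bt + c$ with $a > 0$ is nonnegative for all $t$ if and only if its discriminant $b^2 - 4ac \leq 0$. Applying this yields
\begin{equation}
4\bigl(\mathbb{E}[\tilde{X}\tilde{Y}]\bigr)^2 - 4\,\mathbb{E}[\tilde{Y}^2]\,\mathbb{E}[\tilde{X}^2] \leq 0,
\end{equation}
and rewriting in the original notation gives $\bigl(\mathrm{Cov}(X,Y)\bigr)^2 \leq \mathrm{Var}(X)\,\mathrm{Var}(Y)$. Taking square roots yields the claim.

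There is no real obstacle here; this is classical Cauchy--Schwarz for the covariance bilinear form. The only technicalities are to note that the second moments must be finite for the expressions to make sense (which is implicit in the rest of the paper's use of variances in Lemma~\ref{lemma:bounded_var} and in the proof of Theorem~\ref{theorem:vargrad}), and to separately dispose of the degenerate case in which one of the variances is zero so that the discriminant argument applies cleanly.
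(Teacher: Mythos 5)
Your proof is correct and matches the paper's approach: the paper simply notes that the lemma follows from the Cauchy--Schwarz inequality (deferring details to a citation), and you supply the standard discriminant argument for Cauchy--Schwarz applied to the centered variables $X - \mathbb{E}[X]$ and $Y - \mathbb{E}[Y]$, including the degenerate case $\mathrm{Var}(Y) = 0$. Nothing is missing.
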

\begin{proof}
This can be proved via Cauchy-Schwarz inequality. Refer \citeNew{fujii1997operator} for detail.
\end{proof}

\section{Proof of Theorem 1}
\label{appendix:theorem1}
\begin{proof}
According to conditions, $M_c \in C^{l+1}(\mathbb{R}^d, \mathbb{R})$. Starting from multivariate Taylor's theorem \citeNew{trench2013introduction} of Equation of SmoothGrad,
\begin{align}
\hat{M}_c\left(x\right) &= \frac{1}{n}{\sum\limits_{k=1}^n} {M_c\left(x\right)+ \sum\limits_{1 \leq |s| \leq l} \frac{D^s M_c\left(x\right)}{s!} \epsilon_k^s + \sum\limits_{|s|=l+1} \frac{D^s M_c\left(x+\eta\epsilon_k\right)}{s!} \epsilon_k^s} \\
&= M_c\left(x\right)+ \frac{1}{n}{\sum\limits_{k=1}^n}\sum\limits_{1 \leq |s| \leq l} \frac{D^s M_c\left(x\right)}{s!} \epsilon_k^s + \frac{1}{n}{\sum\limits_{k=1}^n}\sum\limits_{|s|=l+1} \frac{D^s M_c\left(x+\eta\epsilon_k\right)}{s!} \epsilon_k^s \\
&= M_c\left(x\right)+ \frac{1}{n}{\sum\limits_{k=1}^n}\sum\limits_{1 \leq |s| \leq l} \frac{D^s M_c\left(x\right)}{s!} \prod\limits_{m=1}^d{\epsilon_{km}^{s_m}} + \frac{1}{n}{\sum\limits_{k=1}^n}\sum\limits_{|s|=l+1} \frac{D^s M_c\left(x+\eta\epsilon_k\right)}{s!} \prod\limits_{m=1}^d{\epsilon_{km}^{s_m}} \\
\label{eq:taylor} &= M_c\left(x\right)+ \sum\limits_{1 \leq |s| \leq l} \frac{D^s M_c\left(x\right)}{s!} \frac{{\sum_{k=1}^n}\prod_{m=1}^d{\epsilon_{km}^{s_m}}}{n} + \sum\limits_{|s|=l+1} \frac{1}{s!} \frac{{\sum_{k=1}^n} D^s M_c\left(x+\eta\epsilon_k\right) \prod_{m=1}^d{\epsilon_{km}^{s_m}}}{n},
\end{align}
for some $\eta \in \left(0, 1\right)$.
The second term of Equation \ref{eq:taylor} can be rearranged as
\begin{align}
\label{eq:odd_second}
\sum\limits_{\substack{|s|=2p-1\\1 \leq p \leq \ceil{l/2}}} \frac{D^s M_c\left(x\right)}{s!} \frac{{\sum_{k=1}^n}\prod_{m=1}^d{\epsilon_{km}^{s_m}}}{n} + \sum\limits_{\substack{|s|=2p\\1 \leq p \leq \floor{l/2}}} \frac{D^s M_c\left(x\right)}{s!} \frac{{\sum_{k=1}^n}\prod_{m=1}^d{\epsilon_{km}^{s_m}}}{n}.
\end{align}
Meanwhile, due to the continuity of $l+1$-th order partial derivatives in the compact set $\mathbb{B}$, we can obtains the uniform bound of the third term of Equation \ref{eq:taylor} as follows:
\begin{equation}
\label{eq:third_le}\left| \sum\limits_{|s|=l+1} {\frac{1}{s!}} \frac{{\sum_{k=1}^n} D^s M_c\left(x+\eta\epsilon_k\right) \prod_{m=1}^d{ \epsilon_{km}^{s_m}}}{n}\right| \leq \sum\limits_{|s|=l+1} \frac{C}{s!}\frac{{\sum_{k=1}^n}\prod_{m=1}^d{\left| \epsilon_{km}^{s_m} \right| }}{n},
\end{equation}
where $C = \max_{|\alpha|=l+1, y \in \mathbb{B}}\left|D^\alpha M_c\left(y\right)\right|$. Note that $\left| D^s M_c\left(x+\eta\epsilon_k\right) \right| \leq C \textrm{ } \forall k$ when $\left| s \right| = l+1$.

Next, we arrange the terms of Equation \ref{eq:odd_second} and Equation \ref{eq:third_le} in order.
Recall that all $d$ elements of $\epsilon_k$ are sampled independently and identically.
Let $Z_m$ be $Z_m \sim \mathcal{N}\left(0,\,\sigma^{2}\right)$ for $1 \leq m \leq d$. When $n$ is large enough,
\begin{align}
\sum\limits_{\substack{|s|=2p-1\\1 \leq p \leq \ceil{l/2}}} \frac{D^s M_c\left(x\right)}{s!} \frac{{\sum_{k=1}^n}\prod_{m=1}^d{\epsilon_{km}^{s_m}}}{n} &\approx \sum\limits_{\substack{|s|=2p-1\\1 \leq p \leq \ceil{l/2}}} \frac{D^s M_c\left(x\right)}{s!} \mathbb{E}\left[\prod\limits_{m=1}^d Z_m^{s_m}\right]
\\
&= \sum\limits_{\substack{|s|=2p-1\\1 \leq p \leq \ceil{l/2}}} \frac{D^s M_c\left(x\right)}{s!} \prod\limits_{m=1}^d \mathbb{E}\left[Z_m^{s_m}\right] = 0. \\
\sum\limits_{\substack{|s|=2p\\1 \leq p \leq \floor{l/2}}} \frac{D^s M_c\left(x\right)}{s!} \frac{{\sum_{k=1}^n}\prod_{m=1}^d{\epsilon_{km}^{s_m}}}{n} &\approx \sum\limits_{\substack{|s|=2p\\1 \leq p \leq \floor{l/2}\\\forall s_m\textrm{ is zero or even}}} \frac{D^s M_c\left(x\right)}{s!} \mathbb{E}\left[\prod\limits_{m=1}^d Z_m^{s_m}\right]
\\
&= \sum\limits_{\substack{|s|=2p\\1 \leq p \leq \floor{l/2}\\\forall s_m\textrm{ is zero or even}}} \frac{D^s M_c\left(x\right)}{s!} \prod\limits_{m=1}^d \mathbb{E}\left[Z_m^{s_m}\right] \\
&= \sum\limits_{\substack{|s|=2p\\1 \leq p \leq \floor{l/2}\\\forall s_m\textrm{ is zero or even}}} \frac{D^s M_c\left(x\right)}{s!} \prod\limits_{m=1}^d \frac{s_m!}{2^{s_m/2} \left(s_m/2\right)!} \sigma^{s_m}. \\
\sum\limits_{|s|=l+1} \frac{C}{s!}\frac{{\sum_{k=1}^n}\prod_{m=1}^d{\left| \epsilon_{km}^{s_m} \right| }}{n} &\approx 
\sum\limits_{|s|=l+1} \frac{C}{s!}\mathbb{E}\left[\prod\limits_{m=1}^d \left| Z_m^{s_m} \right| \right] \\
&= \sum\limits_{|s|=l+1} \frac{C}{s!}\prod\limits_{m=1}^d \mathbb{E}\left[ \left| Z_m^{s_m} \right| \right] \\
&= \sum\limits_{|s|=l+1} \frac{C}{s!}\prod\limits_{m=1}^d \frac{2^{{s_m}/2} \sigma^{s_m}}{\sqrt{\pi}} \Gamma\left(\frac{s_m}{2}+\frac{1}{2}\right).
\end{align}
from Lemma \ref{lemma:odd}, Lemma \ref{lemma:expectation} and Corollary \ref{corollary}.

As a result,
\begin{align}
\therefore \hat{M}_c\left(x\right) \approx M_c\left(x\right) &+ \sum\limits_{\substack{|s|=2p\\1 \leq p \leq \floor{l/2}\\\forall s_m\textrm{ is zero or even}}} \frac{D^s M_c\left(x\right)}{s!} \prod\limits_{m=1}^d \frac{s_m!}{2^{s_m/2} \left(s_m/2\right)!} \sigma^{s_m} + R,
\end{align}
\begin{align}
|R| = \left| \frac{1}{n}{\sum\limits_{k=1}^n}\sum\limits_{|s|=l+1} \frac{D^s M_c\left(x+\eta\epsilon_k\right)}{s!} \epsilon_k^s \right| &\lessapprox \sum\limits_{|s|=l+1} \frac{C}{s!}\prod\limits_{m=1}^d \frac{2^{{s_m}/2} \sigma^{s_m}}{\sqrt{\pi}} \Gamma\left(\frac{s_m}{2}+\frac{1}{2}\right).
\end{align}
\end{proof}

\section{Proof of Theorem 2}
\label{appendix:threorem2}
\begin{proof}
Starting from Definition of VarGrad and multivariate Taylor's theorem,
\begin{align} \label{eq13}
\widetilde{M}_c\left(x\right) &= \frac{1}{n}{\sum\limits_{k=1}^n} \left\{M_c\left(x\right)+\left( \sum\limits_{1 \leq |s| \leq l} \frac{D^s M_c(x)}{s!} \epsilon_k^s + \sum\limits_{|s|=l+1} \frac{D^s M_c\left(x+\eta\epsilon_k\right)}{s!} \epsilon_k^s\right)\right\}^2 \notag \\
&-\left\{M_c\left(x\right)+\frac{1}{n}{\sum\limits_{k=1}^n}\left(\sum\limits_{1 \leq |s| \leq l} \frac{D^s M_c\left(x\right)}{s!} \epsilon_k^s + \sum\limits_{|s|=l+1} \frac{D^s M_c\left(x+\eta\epsilon_k\right)}{s!} \epsilon_k^s\right)\right\}^2 \\
&= \frac{1}{n}{\sum\limits_{k=1}^n}\left\{ \sum\limits_{1 \leq |s| \leq l} \frac{D^s M_c(x)}{s!} \epsilon_k^s + \sum\limits_{|s|=l+1} \frac{D^s M_c(x+\eta\epsilon_k)}{s!} \epsilon_k^s\right\}^2 \notag \\
&-\left\{\frac{1}{n}{\sum\limits_{k=1}^n}(\sum\limits_{1 \leq |s| \leq l} \frac{D^s M_c(x)}{s!} \epsilon_k^s + \sum\limits_{|s|=l+1} \frac{D^s M_c(x+\eta\epsilon_k)}{s!} \epsilon_k^s)\right\}^2 \\
&= Var\left(\sum\limits_{1 \leq |s| \leq l} \frac{D^s M_c\left(x\right)}{s!} \epsilon_k^s + \sum\limits_{|s|=l+1} \frac{D^s M_c\left(x+\eta\epsilon_k\right)}{s!} \epsilon_k^s\right) \\
\label{eq:34}&= Var\left(\sum\limits_{\substack{|s|=2p-1\\1 \leq p \leq \ceil{l/2}}} \frac{D^s M_c\left(x\right)}{s!} \epsilon_k^s + \sum\limits_{\substack{|s|=2p\\1 \leq p \leq \floor{l/2}}} \frac{D^s M_c\left(x\right)}{s!} \epsilon_k^s + \sum\limits_{|s|=l+1} \frac{D^s M_c\left(x+\eta\epsilon_k\right)}{s!} \epsilon_k^s\right).
\end{align}
By the fact that the variance of sum of random variables is the sum of their covariances, Equation \ref{eq:34} is expanded as
\begin{align}
\label{eq:expand}&\sum\limits_{\substack{|s|=2p-1\\1 \leq p \leq \ceil{l/2}}} \left\{\frac{D^s M_c\left(x\right)}{s!}\right\}^2 Var\left(\epsilon_k^s\right) + \sum\limits_{\substack{|s|=2p\\1 \leq p \leq \floor{l/2}}} \left\{\frac{D^s M_c\left(x\right)}{s!}\right\}^2 Var\left(\epsilon_k^s\right) \notag \\ + &\sum\limits_{|s|=l+1} \left\{\frac{1}{s!}\right\}^2 Var\left(D^s M_c\left(x+\eta\epsilon_k\right)\epsilon_k^s\right) \notag \\ + &2\sum\limits_{\substack{|\alpha|=2p_1-1\\1 \leq p_1 \leq \ceil{l/2}}}\sum\limits_{\substack{|\beta|=2p_2\\1 \leq p_2 \leq \floor{l/2}}} \frac{D^\alpha M_c\left(x\right)}{\alpha!} \frac{D^\beta M_c\left(x\right)}{\beta!} Cov\left(\epsilon_k^\alpha, \epsilon_k^\beta\right) \notag \\ + &2\sum\limits_{\substack{|\alpha|=2p-1\\1 \leq p \leq \ceil{l/2}}}\sum\limits_{|\beta|=l+1} \frac{D^\alpha M_c\left(x\right)}{\alpha!} \frac{1}{\beta!} Cov\left(\epsilon_k^\alpha, D^\beta M_c(x+\eta\epsilon_k) \epsilon_k^\beta\right) \notag \\
+ &2\sum\limits_{\substack{|\alpha|=2p\\1 \leq p \leq \floor{l/2}}}\sum\limits_{|\beta|=l+1} \frac{D^\alpha M_c\left(x\right)}{\alpha!} \frac{1}{\beta!} Cov\left(\epsilon_k^\alpha, D^\beta M_c(x+\eta\epsilon_k) \epsilon_k^\beta\right).
\end{align}
By arranging three residual-free terms of Equation \ref{eq:expand} in order, we get
\begin{align}
\label{eq:free1}\sum\limits_{\substack{|s|=2p-1\\1 \leq p \leq \ceil{l/2}}} \left\{\frac{D^s M_c\left(x\right)}{s!}\right\}^2 Var\left(\epsilon_k^s\right) &= \sum\limits_{\substack{|s|=2p-1\\1 \leq p \leq \ceil{l/2}}} \left\{\frac{D^s M_c\left(x\right)}{s!}\right\}^2 Var\left(\prod\limits_{m=1}^d{\epsilon_{km}^{s_m}}\right). \\
\label{eq:free2}\sum\limits_{\substack{|s|=2p\\1 \leq p \leq \floor{l/2}}} \left\{\frac{D^s M_c\left(x\right)}{s!}\right\}^2 Var\left(\epsilon_k^s\right) &= \sum\limits_{\substack{|s|=2p\\1 \leq p \leq \floor{l/2}}} \left\{\frac{D^s M_c\left(x\right)}{s!}\right\}^2 Var\left(\prod\limits_{m=1}^d{\epsilon_{km}^{s_m}}\right).
\end{align}
\begin{align}
\label{eq:free3}&\sum\limits_{\substack{|\alpha|=2p_1-1\\1 \leq p_1 \leq \ceil{l/2}}}\sum\limits_{\substack{|\beta|=2p_2\\1 \leq p_2 \leq \floor{l/2}}} \frac{D^\alpha M_c\left(x\right)}{\alpha!} \frac{D^\beta M_c\left(x\right)}{\beta!} Cov\left(\epsilon_k^\alpha, \epsilon_k^\beta\right) \notag \\ &= \sum\limits_{\substack{|\alpha|=2p_1-1\\1 \leq p_1 \leq \ceil{l/2}}}\sum\limits_{\substack{|\beta|=2p_2\\1 \leq p_2 \leq \floor{l/2}}} \frac{D^\alpha M_c\left(x\right)}{\alpha!} \frac{D^\beta M_c\left(x\right)}{\beta!} Cov\left(\prod\limits_{m=1}^d{\epsilon_{km}^{\alpha_m}}, \prod\limits_{m=1}^d{\epsilon_{km}^{\beta_m}}\right).
\end{align}
Next, we arrange the terms of Equation \ref{eq:free1}, Equation \ref{eq:free2}, and Equation \ref{eq:free3} in order.
Recall that all $d$ elements of $\epsilon_k$ are sampled independently and identically.
Let $Z_m$ be $Z_m \sim \mathcal{N}(0,\,\sigma^{2})$ for $1 \leq m \leq d$. When $n$ is large enough,
\begin{align}
&\sum\limits_{\substack{|s|=2p-1\\1 \leq p \leq \ceil{l/2}}} \left\{\frac{D^s M_c\left(x\right)}{s!}\right\}^2 Var\left(\prod\limits_{m=1}^d{\epsilon_{km}^{s_m}}\right) \approx \sum\limits_{\substack{|s|=2p-1\\1 \leq p \leq \ceil{l/2}}} \left\{\frac{D^s M_c\left(x\right)}{s!}\right\}^2 Var\left(\prod\limits_{m=1}^d{Z_m^{s_m}}\right) \\
&= \sum\limits_{\substack{|s|=2p-1\\1 \leq p \leq \ceil{l/2}}} \left\{\frac{D^s M_c\left(x\right)}{s!}\right\}^2 \left\{\mathbb{E}\left[\prod\limits_{m=1}^d{Z_m^{2s_m}}\right] - \left(\mathbb{E}\left[\prod\limits_{m=1}^d{Z_m^{s_m}}\right]\right)^2 \right\} \\
&= \sum\limits_{\substack{|s|=2p-1\\1 \leq p \leq \ceil{l/2}}} \left\{\frac{D^s M_c\left(x\right)}{s!}\right\}^2 \left\{\prod\limits_{m=1}^d \mathbb{E}\left[{Z_m^{2s_m}}\right] -  \left(\prod\limits_{m=1}^d\mathbb{E}\left[{Z_m^{s_m}}\right]\right)^2\right\} \\
&= \sum\limits_{\substack{|s|=2p-1\\1 \leq p \leq \ceil{l/2}}} \left\{\frac{D^s M_c\left(x\right)}{s!}\right\}^2 \prod\limits_{m=1}^d \frac{\left(2s_m\right)!}{2^{s_m} {s_m}!} \sigma^{2{s_m}},
\end{align}
from Lemma \ref{lemma:odd} and Lemma \ref{lemma:expectation}.

When Equation \ref{eq:free2} is treated in the same manner,
\begin{align}
&\sum\limits_{\substack{|s|=2p\\1 \leq p \leq \floor{l/2}}} \left\{\frac{D^s M_c\left(x\right)}{s!}\right\}^2 Var\left(\prod\limits_{m=1}^d{\epsilon_{km}^{s_m}}\right) \\
&\approx \sum\limits_{\substack{|s|=2p\\1 \leq p \leq \floor{l/2}}} \left\{\frac{D^s M_c\left(x\right)}{s!}\right\}^2 \left\{\prod\limits_{m=1}^d \mathbb{E}\left[{Z_m^{2 s_m}}\right] - \left(\prod\limits_{m=1}^d\mathbb{E}\left[{Z_m^{s_m}}\right]\right)^2\right\} \\
&= \sum\limits_{\substack{|s|=2p\\1 \leq p \leq \floor{l/2} \\ \forall s_m\textrm{ is zero or even}}} \left\{\frac{D^s M_c\left(x\right)}{s!}\right\}^2 \left\{\prod\limits_{m=1}^d \mathbb{E}\left[{Z_m^{2 s_m}}\right] - \left(\prod\limits_{m=1}^d\mathbb{E}\left[{Z_m^{s_m}}\right]\right)^2\right\} \notag \\
&+ \sum\limits_{\substack{|s|=2p\\1 \leq p \leq \floor{l/2} \\ \exists s_m\textrm{ is odd}}} \left\{\frac{D^s M_c(x)}{s!}\right\}^2 \left\{\prod\limits_{m=1}^d \mathbb{E}\left[{Z_m^{2 s_m}}\right] - \left(\prod\limits_{m=1}^d\mathbb{E}\left[{Z_m^{s_m}}\right]\right)^2\right\} \\
&= \sum\limits_{\substack{|s|=2p\\1 \leq p \leq \floor{l/2} \\ \forall s_m\textrm{ is zero or even}}} \left\{\frac{D^s M_c\left(x\right)}{s!}\right\}^2 \left\{\prod\limits_{m=1}^d \frac{\left(2s_m\right)!}{2^{s_m} {s_m}!} \sigma^{2{s_m}}  - \prod\limits_{m=1}^d \frac{\{\left(s_m\right)!\}^2}{2^{s_m} \{\left(s_m/2\right)!\}^2} \sigma^{{2s_m}}\right\} \notag \\
&+ \sum\limits_{\substack{|s|=2p\\1 \leq p \leq \floor{l/2} \\ \exists s_m\textrm{ is odd}}} \left\{\frac{D^s M_c(x)}{s!}\right\}^2 \prod\limits_{m=1}^d \frac{\left(2s_m\right)!}{2^{s_m} {s_m}!} \sigma^{2{s_m}}.
\end{align}

Finally, for Equation \ref{eq:free3},
\begin{align}
&\sum\limits_{\substack{|\alpha|=2p_1-1\\1 \leq p_1 \leq \ceil{l/2}}}\sum\limits_{\substack{|\beta|=2p_2\\1 \leq p_2 \leq \floor{l/2}}} \frac{D^\alpha M_c\left(x\right)}{\alpha!} \frac{D^\beta M_c\left(x\right)}{\beta!} Cov\left(\prod\limits_{m=1}^d{\epsilon_{km}^{\alpha_m}}, \prod\limits_{m=1}^d{\epsilon_{km}^{\beta_m}}\right) \\
&\approx \sum\limits_{\substack{|\alpha|=2p_1-1\\1 \leq p_1 \leq \ceil{l/2}}}\sum\limits_{\substack{|\beta|=2p_2\\1 \leq p_2 \leq \floor{l/2}}} \frac{D^\alpha M_c\left(x\right)}{\alpha!} \frac{D^\beta M_c\left(x\right)}{\beta!} Cov\left(\prod\limits_{m=1}^d{Z_m^{\alpha_m}}, \prod\limits_{m=1}^d{Z_m^{\beta_m}}\right) \\
&= \sum\limits_{\substack{|\alpha|=2p_1-1\\1 \leq p_1 \leq \ceil{l/2}}}\sum\limits_{\substack{|\beta|=2p_2\\1 \leq p_2 \leq \floor{l/2}}} \frac{D^\alpha M_c\left(x\right)}{\alpha!} \frac{D^\beta M_c\left(x\right)}{\beta!} \left\{\mathbb{E}\left[\prod\limits_{m=1}^d{Z_m^{\alpha_m + \beta_m}}\right] - \mathbb{E}\left[\prod\limits_{m=1}^d Z_m^{\alpha_m}\right]\mathbb{E}\left[\prod\limits_{m=1}^d Z_m^{\beta_m}\right]\right\} \\
& = \sum\limits_{\substack{|\alpha|=2p_1-1\\1 \leq p_1 \leq \ceil{l/2}}}\sum\limits_{\substack{|\beta|=2p_2\\1 \leq p_2 \leq \floor{l/2}}} \frac{D^\alpha M_c\left(x\right)}{\alpha!} \frac{D^\beta M_c\left(x\right)}{\beta!} \left\{\prod\limits_{m=1}^d\mathbb{E}\left[{Z_m^{\alpha_m + \beta_m}}\right] - \prod\limits_{m=1}^d \mathbb{E}\left[Z_m^{\alpha_m}\right] \prod\limits_{m=1}^d \mathbb{E}\left[Z_m^{\beta_m}\right]\right\} \\
&= \sum\limits_{\substack{|\alpha|=2p_1-1\\1 \leq p_1 \leq \ceil{l/2}}}\sum\limits_{\substack{|\beta|=2p_2\\1 \leq p_2 \leq \floor{l/2} \\ \forall \beta_m: \alpha_m + \beta_m\textrm{ is zero or even}}} \frac{D^\alpha M_c\left(x\right)}{\alpha!} \frac{D^\beta M_c\left(x\right)}{\beta!} \prod\limits_{m=1}^d\mathbb{E}\left[{Z_m^{\alpha_m + \beta_m}}\right] \\
&= \sum\limits_{\substack{|\alpha|=2p_1-1\\1 \leq p_1 \leq \ceil{l/2}}}\sum\limits_{\substack{|\beta|=2p_2\\1 \leq p_2 \leq \floor{l/2} \\ \forall \beta_m\textrm{ has same parity with }\alpha_m}} \frac{D^\alpha M_c\left(x\right)}{\alpha!} \frac{D^\beta M_c\left(x\right)}{\beta!} \prod\limits_{m=1}^d\mathbb{E}\left[{Z_m^{\alpha_m + \beta_m}}\right] \\
&= 0.
\end{align}

Then, we try to get the bounds of the remaining three terms of Equation \ref{eq:expand}. 
First of all, we arrange the third term of Equation \ref{eq:expand}.
Set $C = \max_{\left|\alpha\right|=l+1, y \in \mathbb{B}}\left|D^\alpha M_c\left(y\right)\right|$. Because $\left| D^s M_c\left(x+\eta\epsilon_k\right) \right| \leq C$ and $l$ is even,
\begin{align}
R_1&=\sum\limits_{|s|=l+1} \left\{\frac{1}{s!}\right\}^2 Var\left(D^s M_c\left(x+\eta\epsilon_k\right)\epsilon_k^s\right) = \sum\limits_{|s|=l+1} \left\{\frac{1}{s!}\right\}^2 Var\left(D^s M_c\left(x+\eta\epsilon_k\right) \prod\limits_{m=1}^d{\epsilon_{km}^{s_m}}\right) \\
&\lessapprox \sum\limits_{|s|=l+1} \left\{\frac{C}{s!}\right\}^2 Var\left(\prod\limits_{m=1}^d{Z_m^{s_m}}\right) = \sum\limits_{|s|=l+1} \left\{\frac{C}{s!}\right\}^2 \left\{\mathbb{E}\left[\prod\limits_{m=1}^d{Z_m^{2s_m}}\right] - \left(\mathbb{E}\left[\prod\limits_{m=1}^d{Z_m^{s_m}}\right]\right)^2 \right\} \\
&= \sum\limits_{|s|=l+1} \left\{\frac{C}{s!}\right\}^2 \prod\limits_{m=1}^d \frac{\left(2s_m\right)!}{2^{s_m} {s_m}!} \sigma^{2{s_m}},
\end{align}
by Lemma \ref{lemma:bounded_var}. Note that $\mathbb{E}\left[\prod\limits_{m=1}^d{\epsilon_{km}^{s_m}}\right] \approx 0$ so Lemma \ref{lemma:bounded_var} can be applied.

Then, we deal with the fifth term of equation \ref{eq:expand}. Note that triangular inequality, Lemma \ref{lemma:covarvar}, and Lemma \ref{lemma:bounded_var} are used in turn.
\begin{align}
\label{eq:r2_start}
R_2=&\sum\limits_{\substack{|\alpha|=2p-1\\1 \leq p \leq \ceil{l/2}}}\sum\limits_{|\beta|=l+1} \frac{D^\alpha M_c\left(x\right)}{\alpha!} \frac{1}{\beta!} Cov\left(\epsilon_k^\alpha, D^\beta M_c\left(x+\eta\epsilon_k\right) \epsilon_k^\beta\right) \\
&\leq \sum\limits_{\substack{|\alpha|=2p-1\\1 \leq p \leq \ceil{l/2}}}\sum\limits_{|\beta|=l+1} \frac{\left| D^\alpha M_c\left(x\right) \right| }{\alpha!} \frac{1}{\beta!} \left| Cov\left(\epsilon_k^\alpha, D^\beta M_c\left(x+\eta\epsilon_k\right) \epsilon_k^\beta\right) \right| \\
&\leq \sum\limits_{\substack{|\alpha|=2p-1\\1 \leq p \leq \ceil{l/2}}}\sum\limits_{|\beta|=l+1} \frac{\left| D^\alpha M_c\left(x\right) \right| }{\alpha!} \frac{1}{\beta!} \sqrt{Var\left(\epsilon_k^\alpha\right) Var\left(D^\beta M_c\left(x+\eta\epsilon_k\right) \epsilon_k^\beta\right)} \\
&= \sum\limits_{\substack{|\alpha|=2p-1\\1 \leq p \leq \ceil{l/2}}}\sum\limits_{|\beta|=l+1} \frac{\left| D^\alpha M_c\left(x\right) \right| }{\alpha!} \frac{1}{\beta!} \sqrt{Var\left( \prod\limits_{m=1}^d{\epsilon_{km}^{\alpha_m}} \right) Var\left(D^\beta M_c\left(x+\eta\epsilon_k\right) \prod\limits_{m=1}^d{\epsilon_{km}^{\beta_m}}\right)} \\
&\lessapprox \sum\limits_{\substack{|\alpha|=2p-1\\1 \leq p \leq \ceil{l/2}}}\sum\limits_{|\beta|=l+1} \frac{\left| D^\alpha M_c(x) \right| }{\alpha!} \frac{C}{\beta!} \sqrt{Var\left( \prod\limits_{m=1}^d{Z_m^{\alpha_m}} \right) Var\left(\prod\limits_{m=1}^d{Z_m^{\beta_m}}\right)} \\
\label{eq:r2_end}&= \sum\limits_{\substack{|\alpha|=2p-1\\1 \leq p \leq \ceil{l/2}}}\sum\limits_{|\beta|=l+1} \frac{\left| D^\alpha M_c\left(x\right) \right| }{\alpha!} \frac{C}{\beta!} \sqrt{\prod\limits_{m=1}^d \frac{\left(2\alpha_m\right)! \left(2\beta_m\right)!}{2^{\alpha_m+\beta_m} {\alpha_m}! {\beta_m}!} \sigma^{2\left(\alpha_m+\beta_m\right)}}.
\end{align}

The sixth term of equation \ref{eq:expand} is handled in a similar way with Equation \ref{eq:r2_start} - \ref{eq:r2_end}.
\begin{align}
R_3=&\sum\limits_{\substack{|\alpha|=2p\\1 \leq p \leq \floor{l/2}}}\sum\limits_{|\beta|=l+1} \frac{D^\alpha M_c\left(x\right)}{\alpha!} \frac{1}{\beta!} Cov\left(\epsilon_k^\alpha, D^\beta M_c\left(x+\eta\epsilon_k\right) \epsilon_k^\beta\right) \\
&\lessapprox \sum\limits_{\substack{|\alpha|=2p\\1 \leq p \leq \floor{l/2}}}\sum\limits_{|\beta|=l+1} \frac{\left| D^\alpha M_c\left(x\right) \right| }{\alpha!} \frac{C}{\beta!} \sqrt{Var\left( \prod\limits_{m=1}^d{Z_m^{\alpha_m}} \right) Var\left(\prod\limits_{m=1}^d{Z_m^{\beta_m}}\right)} \\
&= \sum\limits_{\substack{|\alpha|=2p\\1 \leq p \leq \floor{l/2} \\ \forall s_m\textrm{ is zero or even}}}\sum\limits_{|\beta|=l+1} \frac{\left| D^\alpha M_c\left(x\right) \right| }{\alpha!} \frac{C}{\beta!} \sqrt{Var\left( \prod\limits_{m=1}^d{Z_m^{\alpha_m}} \right) Var\left(\prod\limits_{m=1}^d{Z_m^{\beta_m}}\right)} \notag \\
&+ \sum\limits_{\substack{|\alpha|=2p\\1 \leq p \leq \floor{l/2} \\ \exists s_m\textrm{ is odd}}}\sum\limits_{|\beta|=l+1} \frac{\left| D^\alpha M_c\left(x\right) \right| }{\alpha!} \frac{C}{\beta!} \sqrt{Var\left( \prod\limits_{m=1}^d{Z_m^{\alpha_m}} \right) Var\left(\prod\limits_{m=1}^d{Z_m^{\beta_m}}\right)} \\
&= \sum\limits_{\substack{|\alpha|=2p\\1 \leq p \leq \floor{l/2} \\ \forall s_m\textrm{ is zero or even}}}\sum\limits_{|\beta|=l+1} \frac{\left| D^\alpha M_c\left(x\right) \right| }{\alpha!} \frac{C}{\beta!} \notag \sqrt{\left[\prod\limits_{m=1}^d \frac{\left(2\alpha_m\right)!}{2^{\alpha_m} {\alpha_m}!} \sigma^{2{\alpha_m}}  - \prod\limits_{m=1}^d \frac{\left(\alpha_m!\right)^2}{2^{\alpha_m} \left\{\left(\alpha_m/2\right)!\right\}^2} \sigma^{{2\alpha_m}}\right] \prod\limits_{m=1}^d \frac{\left(2\beta_m\right)!}{2^{\beta_m} {\beta_m}!} \sigma^{2{\beta_m}}} \notag \\
&+ \sum\limits_{\substack{|\alpha|=2p\\1 \leq p \leq \floor{l/2} \\ \exists s_m\textrm{ is odd}}}\sum\limits_{|\beta|=l+1} \frac{\left| D^\alpha M_c\left(x\right) \right| }{\alpha!} \frac{C}{\beta!} \sqrt{\prod\limits_{m=1}^d \frac{\left(2\alpha_m\right)!}{2^{\alpha_m} {\alpha_m}!} \sigma^{2{\alpha_m}} \prod\limits_{m=1}^d \frac{\left(2\beta_m\right)!}{2^{\beta_m} {\beta_m}!} \sigma^{2{\beta_m}}}.
\end{align}

As a result,
\begin{align}
\therefore \widetilde{M}_c(x) \approx &\sum\limits_{\substack{|s|=2p-1\\1 \leq p \leq \ceil{l/2}}} \left\{\frac{D^s M_c\left(x\right)}{s!}\right\}^2 \prod\limits_{m=1}^d \frac{\left(2s_m\right)!}{2^{s_m} {s_m}!} \sigma^{2{s_m}} \notag \\
&+ \sum\limits_{\substack{|s|=2p\\1 \leq p \leq \floor{l/2} \\ \forall s_m\textrm{ is zero or even}}} \left\{\frac{D^s M_c\left(x\right)}{s!}\right\}^2 \left[\prod\limits_{m=1}^d \frac{\left(2s_m\right)!}{2^{s_m} {s_m}!} \sigma^{2{s_m}}  - \prod\limits_{m=1}^d \frac{\left(s_m!\right)^2}{2^{s_m} \left\{\left(s_m/2\right)!\right\}^2} \sigma^{{2s_m}}\right] \notag \\
&+ \sum\limits_{\substack{|s|=2p\\1 \leq p \leq \floor{l/2} \\ \exists s_m\textrm{ is odd}}} \left\{\frac{D^s M_c\left(x\right)}{s!}\right\}^2 \prod\limits_{m=1}^d \frac{\left(2s_m\right)!}{2^{s_m} {s_m}!} \sigma^{2{s_m}} + R_1 + 2R_2 + 2R_3,
\end{align}
\begin{align}
&|R_1| \lessapprox \sum\limits_{|s|=l+1} \left\{\frac{C}{s!}\right\}^2 \prod\limits_{m=1}^d \frac{\left(2s_m\right)!}{2^{s_m} {s_m}!} \sigma^{2{s_m}}, \\
&|R_2| \lessapprox \sum\limits_{\substack{|\alpha|=2p-1\\1 \leq p \leq \ceil{l/2}}}\sum\limits_{|\beta|=l+1} \frac{\left| D^\alpha M_c\left(x\right) \right| }{\alpha!} \frac{C}{\beta!} \sqrt{\prod\limits_{m=1}^d \frac{\left(2\alpha_m\right)! \left(2\beta_m\right)!}{2^{\alpha_m+\beta_m} {\alpha_m}! {\beta_m}!} \sigma^{2\left(\alpha_m+\beta_m\right)}},
\end{align}
\begin{align}
|R_3| \lessapprox &\sum\limits_{\substack{|\alpha|=2p\\1 \leq p \leq \floor{l/2} \\ \forall s_m\textrm{ is zero or even}}}\sum\limits_{|\beta|=l+1} \frac{\left| D^\alpha M_c\left(x\right) \right| }{\alpha!} \frac{C}{\beta!} \notag \sqrt{\left[\prod\limits_{m=1}^d \frac{\left(2\alpha_m\right)!}{2^{\alpha_m} {\alpha_m}!} \sigma^{2{\alpha_m}}  - \prod\limits_{m=1}^d \frac{\left(\alpha_m!\right)^2}{2^{\alpha_m} \left\{\left(\alpha_m/2\right)!\right\}^2} \sigma^{{2\alpha_m}}\right] \prod\limits_{m=1}^d \frac{\left(2\beta_m\right)!}{2^{\beta_m} {\beta_m}!} \sigma^{2{\beta_m}}} \notag \\
&+ \sum\limits_{\substack{|\alpha|=2p\\1 \leq p \leq \floor{l/2} \\ \exists s_m\textrm{ is odd}}}\sum\limits_{|\beta|=l+1} \frac{\left| D^\alpha M_c\left(x\right) \right| }{\alpha!} \frac{C}{\beta!} \sqrt{\prod\limits_{m=1}^d \frac{\left(2\alpha_m\right)!}{2^{\alpha_m} {\alpha_m}!} \sigma^{2{\alpha_m}} \prod\limits_{m=1}^d \frac{\left(2\beta_m\right)!}{2^{\beta_m} {\beta_m}!} \sigma^{2{\beta_m}}}.
\end{align}
\end{proof}

\bibliographyNew{appendix_bio}
\bibliographystyleNew{icml2018_appendix}



\end{document}


}%
           \typeout{*******************************************************}%
 	    \typeout{}%
           \typeout{}%
	   \chead{\small\bf Title Suppressed Due to Excessive Size}%
    \else
  	   \chead{\small\bf\@icmltitlerunning}%
    \fi

  \thispagestyle{plain}


  {\center\baselineskip 18pt
                       \toptitlebar{\Large\bf #1}\bottomtitlebar}
}

\gdef\icmlfullauthorlist{}
\newcommand\addstringtofullauthorlist{\g@addto@macro\icmlfullauthorlist}
\newcommand\addtofullauthorlist[1]{%
  \ifdefined\icmlanyauthors%
    \addstringtofullauthorlist{, #1}%
  \else%
    \addstringtofullauthorlist{#1}%
    \gdef\icmlanyauthors{1}%
  \fi%
  \ifdefined\nohyperref\else\ifdefined\hypersetup%
    \hypersetup{pdfauthor=\icmlfullauthorlist}%
  \fi\fi}

\def\toptitlebar{\hrule height1pt \vskip .25in} 
\def\bottomtitlebar{\vskip .22in \hrule height1pt \vskip .3in} 

\newenvironment{icmlauthorlist}{%
  \setlength\topsep{0pt}
  \setlength\parskip{0pt}
  \begin{center}
}{%
  \end{center}
}

\newcounter{@affiliationcounter}
\newcommand{\@pa}[1]{%
\ifcsname the@affil#1\endcsname
\else
  \ifcsname @icmlsymbol#1\endcsname
  \else
  \stepcounter{@affiliationcounter}%
  \newcounter{@affil#1}%
  \setcounter{@affil#1}{\value{@affiliationcounter}}%
  \fi
\fi%
\ifcsname @icmlsymbol#1\endcsname
  \textsuperscript{\csname @icmlsymbol#1\endcsname\,}%
\else
  \textsuperscript{\arabic{@affil#1}\,}%
\fi
}

\newcommand{\icmlauthor}[2]{%
  \ifdefined\isaccepted
    \mbox{\bf #1}\,\@for\theaffil:=#2\do{\@pa{\theaffil}} \addtofullauthorlist{#1}%
   \else
    \ifdefined\@icmlfirsttime
    \else
      \gdef\@icmlfirsttime{1}
      \mbox{\bf Anonymous Authors}\@pa{@anon} \addtofullauthorlist{Anonymous Authors}
     \fi
    \fi
}

\newcommand{\icmlsetsymbol}[2]{%
  \expandafter\gdef\csname @icmlsymbol#1\endcsname{#2}
 }

\newcommand{\icmlaffiliation}[2]{%
\ifdefined\isaccepted
\ifcsname the@affil#1\endcsname
 \expandafter\gdef\csname @affilname\csname the@affil#1\endcsname\endcsname{#2}%
\else
  {\bf AUTHORERR: Error in use of \textbackslash{}icmlaffiliation command. Label ``#1'' not mentioned in some \textbackslash{}icmlauthor\{author name\}\{labels here\} command beforehand. }
  \typeout{}%
  \typeout{}%
  \typeout{*******************************************************}%
  \typeout{Affiliation label undefined. }%
  \typeout{Make sure \string\icmlaffiliation\space follows }
  \typeout{all of \string\icmlauthor\space commands}%
  \typeout{*******************************************************}%
  \typeout{}%
  \typeout{}%
\fi
\else 
 \expandafter\gdef\csname @affilname1\endcsname{Anonymous Institution, Anonymous City, Anonymous Region, Anonymous Country}
\fi
}

\newcommand{\icmlcorrespondingauthor}[2]{
\ifdefined\isaccepted
 \ifdefined\icmlcorrespondingauthor@text
   \g@addto@macro\icmlcorrespondingauthor@text{, #1 \textless{}#2\textgreater{}}
 \else
   \gdef\icmlcorrespondingauthor@text{#1 \textless{}#2\textgreater{}}
 \fi
\else
\gdef\icmlcorrespondingauthor@text{Anonymous Author \textless{}anon.email@domain.com\textgreater{}}
\fi
}

\newcommand{\icmlEqualContribution}{\textsuperscript{*}Equal contribution }

\newcounter{@affilnum}
\newcommand{\printAffiliationsAndNotice}[1]{%
\stepcounter{@affiliationcounter}%
{\let\thefootnote\relax\footnotetext{\hspace*{-\footnotesep}\ifdefined\isaccepted #1\fi%
\forloop{@affilnum}{1}{\value{@affilnum} < \value{@affiliationcounter}}{
\textsuperscript{\arabic{@affilnum}}\ifcsname @affilname\the@affilnum\endcsname%
\csname @affilname\the@affilnum\endcsname%
\else
{\bf AUTHORERR: Missing \textbackslash{}icmlaffiliation.}
\fi
}.
\ifdefined\icmlcorrespondingauthor@text
Correspondence to: \icmlcorrespondingauthor@text.
\else
{\bf AUTHORERR: Missing \textbackslash{}icmlcorrespondingauthor.}
\fi

\ \\
\Notice@String
}
}
}


\long\def\icmladdress#1{%
 {\bf The \textbackslash{}icmladdress command is no longer used.  See the example\_paper PDF .tex for usage of \textbackslash{}icmlauther and \textbackslash{}icmlaffiliation.}
}

\def\icmlkeywords#1{%
  \ifdefined\nohyperref\else\ifdefined\hypersetup
    \hypersetup{pdfkeywords={#1}}
  \fi\fi
}

\setcitestyle{authoryear,round,citesep={;},aysep={,},yysep={;}}

\renewenvironment{abstract}
   {%
\centerline{\large\bf Abstract}
    \vspace{-0.12in}\begin{quote}}
   {\par\end{quote}\vskip 0.12in}


\def\@startsection#1#2#3#4#5#6{\if@noskipsec \leavevmode \fi
   \par \@tempskipa #4\relax
   \@afterindenttrue
   \ifdim \@tempskipa <\z@ \@tempskipa -\@tempskipa \fi
   \if@nobreak \everypar{}\else
     \addpenalty{\@secpenalty}\addvspace{\@tempskipa}\fi \@ifstar
     {\@ssect{#3}{#4}{#5}{#6}}{\@dblarg{\@sict{#1}{#2}{#3}{#4}{#5}{#6}}}}

\def\@sict#1#2#3#4#5#6[#7]#8{\ifnum #2>\c@secnumdepth
     \def\@svsec{}\else 
     \refstepcounter{#1}\edef\@svsec{\csname the#1\endcsname}\fi
     \@tempskipa #5\relax
      \ifdim \@tempskipa>\z@
        \begingroup #6\relax
          \@hangfrom{\hskip #3\relax\@svsec.~}{\interlinepenalty \@M #8\par}
        \endgroup
       \csname #1mark\endcsname{#7}\addcontentsline
         {toc}{#1}{\ifnum #2>\c@secnumdepth \else
                      \protect\numberline{\csname the#1\endcsname}\fi
                    #7}\else
        \def\@svsechd{#6\hskip #3\@svsec #8\csname #1mark\endcsname
                      {#7}\addcontentsline
                           {toc}{#1}{\ifnum #2>\c@secnumdepth \else
                             \protect\numberline{\csname the#1\endcsname}\fi
                       #7}}\fi
     \@xsect{#5}}

\def\@sect#1#2#3#4#5#6[#7]#8{\ifnum #2>\c@secnumdepth
     \def\@svsec{}\else 
     \refstepcounter{#1}\edef\@svsec{\csname the#1\endcsname\hskip 0.4em }\fi
     \@tempskipa #5\relax
      \ifdim \@tempskipa>\z@ 
        \begingroup #6\relax
          \@hangfrom{\hskip #3\relax\@svsec}{\interlinepenalty \@M #8\par}
        \endgroup
       \csname #1mark\endcsname{#7}\addcontentsline
         {toc}{#1}{\ifnum #2>\c@secnumdepth \else
                      \protect\numberline{\csname the#1\endcsname}\fi
                    #7}\else
        \def\@svsechd{#6\hskip #3\@svsec #8\csname #1mark\endcsname
                      {#7}\addcontentsline
                           {toc}{#1}{\ifnum #2>\c@secnumdepth \else
                             \protect\numberline{\csname the#1\endcsname}\fi
                       #7}}\fi
     \@xsect{#5}}

\def\thesection {\arabic{section}}
\def\thesubsection {\thesection.\arabic{subsection}}
\def\section{\@startsection{section}{1}{\z@}{-0.12in}{0.02in}
             {\large\bf\raggedright}}
\def\subsection{\@startsection{subsection}{2}{\z@}{-0.10in}{0.01in}
                {\normalsize\bf\raggedright}}
\def\subsubsection{\@startsection{subsubsection}{3}{\z@}{-0.08in}{0.01in}
                {\normalsize\sc\raggedright}}
\def\paragraph{\@startsection{paragraph}{4}{\z@}{1.5ex plus
  0.5ex minus .2ex}{-1em}{\normalsize\bf}}
\def\subparagraph{\@startsection{subparagraph}{5}{\z@}{1.5ex plus
  0.5ex minus .2ex}{-1em}{\normalsize\bf}}
 
\footnotesep 6.65pt %
\skip\footins 9pt 
\def\footnoterule{\kern-3pt \hrule width 0.8in \kern 2.6pt } 
\setcounter{footnote}{0} 
 
\parindent 0pt 
\topsep 4pt plus 1pt minus 2pt 
\partopsep 1pt plus 0.5pt minus 0.5pt 
\itemsep 2pt plus 1pt minus 0.5pt 
\parsep 2pt plus 1pt minus 0.5pt 
\parskip 6pt
 
\leftmargin 2em \leftmargini\leftmargin \leftmarginii 2em 
\leftmarginiii 1.5em \leftmarginiv 1.0em \leftmarginv .5em  
\leftmarginvi .5em 
\labelwidth\leftmargini\advance\labelwidth-\labelsep \labelsep 5pt 
 
\def\@listi{\leftmargin\leftmargini} 
\def\@listii{\leftmargin\leftmarginii 
   \labelwidth\leftmarginii\advance\labelwidth-\labelsep 
   \topsep 2pt plus 1pt minus 0.5pt 
   \parsep 1pt plus 0.5pt minus 0.5pt 
   \itemsep \parsep} 
\def\@listiii{\leftmargin\leftmarginiii 
    \labelwidth\leftmarginiii\advance\labelwidth-\labelsep 
    \topsep 1pt plus 0.5pt minus 0.5pt  
    \parsep \z@ \partopsep 0.5pt plus 0pt minus 0.5pt 
    \itemsep \topsep} 
\def\@listiv{\leftmargin\leftmarginiv 
     \labelwidth\leftmarginiv\advance\labelwidth-\labelsep} 
\def\@listv{\leftmargin\leftmarginv 
     \labelwidth\leftmarginv\advance\labelwidth-\labelsep} 
\def\@listvi{\leftmargin\leftmarginvi 
     \labelwidth\leftmarginvi\advance\labelwidth-\labelsep} 
 
\abovedisplayskip 7pt plus2pt minus5pt%
\belowdisplayskip \abovedisplayskip 
\abovedisplayshortskip  0pt plus3pt%
\belowdisplayshortskip  4pt plus3pt minus3pt%
 
\def\@normalsize{\@setsize\normalsize{11pt}\xpt\@xpt} 
\def\small{\@setsize\small{10pt}\ixpt\@ixpt} 
\def\footnotesize{\@setsize\footnotesize{10pt}\ixpt\@ixpt} 
\def\scriptsize{\@setsize\scriptsize{8pt}\viipt\@viipt} 
\def\tiny{\@setsize\tiny{7pt}\vipt\@vipt} 
\def\large{\@setsize\large{14pt}\xiipt\@xiipt} 
\def\Large{\@setsize\Large{16pt}\xivpt\@xivpt} 
\def\LARGE{\@setsize\LARGE{20pt}\xviipt\@xviipt} 
\def\huge{\@setsize\huge{23pt}\xxpt\@xxpt} 
\def\Huge{\@setsize\Huge{28pt}\xxvpt\@xxvpt} 

\newsavebox\newcaptionbox\newdimen\newcaptionboxwid

\long\def\@makecaption#1#2{
 \vskip 10pt 
        \baselineskip 11pt
        \setbox\@tempboxa\hbox{#1. #2}
        \ifdim \wd\@tempboxa >\hsize
        \sbox{\newcaptionbox}{\small\sl #1.~}
        \newcaptionboxwid=\wd\newcaptionbox
        \usebox\newcaptionbox {\footnotesize #2}
        \else 
          \centerline{{\small\sl #1.} {\small #2}} 
        \fi}

\def\fnum@figure{Figure \thefigure}
\def\fnum@table{Table \thetable}

\def\abovestrut#1{\rule[0in]{0in}{#1}\ignorespaces}
\def\belowstrut#1{\rule[-#1]{0in}{#1}\ignorespaces}

\def\abovespace{\abovestrut{0.20in}}
\def\aroundspace{\abovestrut{0.20in}\belowstrut{0.10in}}
\def\belowspace{\belowstrut{0.10in}}

\def\texitem#1{\par\noindent\hangindent 12pt
               \hbox to 12pt {\hss #1 ~}\ignorespaces}
\def\icmlitem{\texitem{$\bullet$}}

\long\def\comment#1{}


\makeatletter
\newbox\icmlrulerbox
\newcount\icmlrulercount
\newdimen\icmlruleroffset
\newdimen\cv@lineheight
\newdimen\cv@boxheight
\newbox\cv@tmpbox
\newcount\cv@refno
\newcount\cv@tot
\newcount\cv@tmpc@ \newcount\cv@tmpc
\def\fillzeros[#1]#2{\cv@tmpc@=#2\relax\ifnum\cv@tmpc@<0\cv@tmpc@=-\cv@tmpc@\fi
\cv@tmpc=1 %
\loop\ifnum\cv@tmpc@<10 \else \divide\cv@tmpc@ by 10 \advance\cv@tmpc by 1 \fi
   \ifnum\cv@tmpc@=10\relax\cv@tmpc@=11\relax\fi \ifnum\cv@tmpc@>10 \repeat
\ifnum#2<0\advance\cv@tmpc1\relax-\fi
\loop\ifnum\cv@tmpc<#1\relax0\advance\cv@tmpc1\relax\fi \ifnum\cv@tmpc<#1 \repeat
\cv@tmpc@=#2\relax\ifnum\cv@tmpc@<0\cv@tmpc@=-\cv@tmpc@\fi \relax\the\cv@tmpc@}%
\def\makevruler[#1][#2][#3][#4][#5]{
	\begingroup\offinterlineskip
		\textheight=#5\vbadness=10000\vfuzz=120ex\overfullrule=0pt%
		\global\setbox\icmlrulerbox=\vbox to \textheight{%
			{
				\parskip=0pt\hfuzz=150em\cv@boxheight=\textheight
				\cv@lineheight=#1\global\icmlrulercount=#2%
				\cv@tot\cv@boxheight\divide\cv@tot\cv@lineheight\advance\cv@tot2%
				\cv@refno1\vskip-\cv@lineheight\vskip1ex%
				\loop\setbox\cv@tmpbox=\hbox to0cm{					 
					\hfil {\hfil\fillzeros[#4]\icmlrulercount}
				}%
				\ht\cv@tmpbox\cv@lineheight\dp\cv@tmpbox0pt\box\cv@tmpbox\break
				\advance\cv@refno1\global\advance\icmlrulercount#3\relax
				\ifnum\cv@refno<\cv@tot\repeat
			}
		}
	\endgroup
}%
\makeatother

\def\icmlruler#1{\makevruler[12pt][#1][1][3][\textheight]\usebox{\icmlrulerbox}}
\AddToShipoutPicture{%
\icmlruleroffset=\textheight
\advance\icmlruleroffset by 5.2pt 
  \color[rgb]{.7,.7,.7}
  \ifdefined\isaccepted \else
	  \AtTextUpperLeft{%
	    \put(\LenToUnit{-35pt},\LenToUnit{-\icmlruleroffset}){
	      \icmlruler{\icmlrulercount}}
	  }
	 \fi
}
\endinput